\def\BibTeX{{\rm B\kern-.05em{\sc i\kern-.025em b}\kern-.08em
    T\kern-.1667em\lower.7ex\hbox{E}\kern-.125emX}}
\newcommand\numberthis{\addtocounter{equation}{1}\tag{\theequation}}
\newtheorem{theorem}{Theorem}
\newtheorem{lemma}{Lemma}
\newtheorem{definition}{Definition}
\newtheorem{corollary}{Corollary}
\newtheorem{proposition}{Proposition}
\newtheorem{remark}{Remark}
\DeclareMathOperator{\argmin}{arg min}
\DeclareMathOperator{\Var}{Var}
\title{Implicit Regularization Properties of Variance Reduced Stochastic Mirror Descent}
\author{Yiling Luo, Xiaoming Huo, Yajun Mei}
\date{April 2022}
\begin{document}

\maketitle

\begin{abstract}
    In machine learning and statistical data analysis, we often run into objective function that is a summation: the number of terms in the summation possibly is equal to the sample size, which can be enormous.
In such a setting, the stochastic mirror descent (SMD) algorithm is a numerically efficient method---each iteration involving a very small subset of the data. 
The variance reduction version of SMD (VRSMD) can further improve SMD by inducing faster convergence. 
On the other hand, algorithms such as gradient descent and stochastic gradient descent have the implicit regularization property that leads to better performance in terms of the generalization errors.
Little is known on whether such a property holds for VRSMD. 
We prove here that the discrete VRSMD estimator sequence converges to the minimum mirror interpolant in the linear regression. 
This establishes the implicit regularization property for VRSMD.
As an application of the above result, we derive a model estimation accuracy result in the setting when the true model is sparse. 
We use numerical examples to illustrate the empirical power of VRSMD. 
\end{abstract}

\section{Introduction}
In statistics and machine learning, it is common to optimize an objective function that is a finite-sum. 
SMD efficiently optimizes such an objective by using a subset of data to do one step update of the variable/parameter. 
Further adopting the variance reduction technique to SMD, we get the VRSMD algorithm that enjoys fast convergence \cite{li2021variance,allen2017katyusha}. 

The implicit regularization is a relatively new concept \cite{bartlett2021deep} that explains why a result of an algorithm generalizes well in some overparameterized models \cite{bartlett2021deep,NEURIPS2020_37740d59}.
It refers to the fact that an algorithm can automatically select a minimum norm solution, which is not explicitly induced by the objective function. 
There are works on implicit regularization for Gradient Descent \cite{NEURIPS2019_5cf21ce3,zhao2019implicit,fan2021understanding,NEURIPS2019_c0c783b5}, Stochastic Gradient Descent \cite{ali2020implicit,azulay2021implicit,smith2021on,pmlr-v125-blanc20a}, and Stochastic Mirror Descent \cite{azizan2018stochastic}. 
Considering the computational advantage of VRSMD compared to all the algorithms above, it would be even better if VRSMD also has the useful implicit regularization property.

From technical point of view, our work contains the following two results:
\begin{itemize}
    \item In linear regression (including underfitting and overfitting), we show that the solution sequence of VRSMD converges to the minimum mirror interpolant, which is the implicit regularization property of VRSMD, and we also specify the convergence rate. 
    \item In sparse regression, by choosing a proper mirror map, we show that the implicit regularization estimator finds the sparse true parameter with a small error. Moreover, compared with the deterministic algorithms in \cite{NEURIPS2019_5cf21ce3,zhao2019implicit}, our algorithm is equally good in estimating a sparse truth while being computationally faster, as supported by our experiments. 
\end{itemize}

From the application point of view, our result shows that 
the Mirror Descent and its variants are useful to explore the low dimensional geometric structure from high dimensional data, which leads to nice generalization properties. 

\textbf{Notation}.
The following notations are used throughout this paper.
For a matrix $X\in \mathbbm{R}^{n\times p}$, we denote by $\mathrm{col}(X):= \{\mathbf{u}\in \mathbbm{R}^n : \exists \mathbf{v}\in \mathbbm{R}^p, \mathbf{u} = X \mathbf{v}\}$ the column space of $X$, and we denote by $\mathcal{N}(X):= \{\mathbf{v}\in \mathbbm{R}^p : X \mathbf{v} = \mathbf{0}\}$ the null space of $X$. For a vector $\mathbf{v}\in \mathbbm{R}^p$, we use the definition of $\ell_p$ norm of $\mathbf{v}$ that $\|\mathbf{v}\|_{p} = (\sum_{i} |v_i|^p)^{1/p}$ for $p\geq 1$ and we denote the number of non-zero elements in $\mathbf{v}$ as $\|\mathbf{v}\|_0$. For a subset of indexes $I\subset \{1,\ldots,p\}$, we define $\mathbf{v}_I := (v_i)_{i\in I}$, and denote the cardinality of $I$ as $|I|$. For a set $\mathcal{X} \subset \mathbbm{R}^p$, define $P_{\mathcal{X}}\mathbf{v} = \argmin_{\mathbf{u}\in \mathcal{X}} \|\mathbf{u} - \mathbf{v}\|_2$. For two non-negative-valued functions $a(x)$ and $b(x)$, we denote $a(x) \sim \mathcal{O}(b(x))$ if there exists an absolute constant $C$ such that $a(x) \leq Cb(x)$; and we denote $a(x) \sim \Theta(b(x))$ if there are absolute constants $c, C$ such that $cb(x) \leq a \leq Cb(x)$.

\textbf{Organization}. The rest of the paper is as follows. 
In Section \ref{ch3:sec:02}, we describe our problem formulation and algorithm. 
Section \ref{ch3:sec:03} states the main theory on the implicit regularization. 
Section \ref{ch3:sec:04} develops insight into the implicit regularization and establishes the sparse recovery property. 
Section \ref{ch3:sec:05} supports the theory on implicit regularization by simulations and experiments.
In Section \ref{ch3:sec:06}, we discuss the finding of our work and some future directions. 
All proofs are deferred to the appendix.

\section{Formulation and Algorithm}\label{ch3:sec:02}
To better present the material, we split this section into two subsections. In Subsection \ref{ch3:sec:02.01} we formulate the optimization problem motivated from linear regression. In Subsection \ref{ch3:sec:02.02}, we present the Variance Reduction Stochastic Mirror Descent (VRSMD) algorithm for solving such an optimization problem.

\subsection{Formulation}\label{ch3:sec:02.01}
Assume we observe data pairs $\{(\mathbf{x}_i,y_i)\in \mathbbm{R}^p \times \mathbbm{R}\}_{i=1}^n$, the goal is to predict the response $y$ based on $\mathbf{x}$. Under the empirical risk minimization framework, we consider the general optimization problem of the form 
\begin{equation}\label{ch3:eq:sec2eq1}
\min_{\boldsymbol{\beta}} F(\boldsymbol{\beta}) = \frac{1}{n}\sum_{i=1}^n f_i(\boldsymbol{\beta}; (\mathbf{x}_i,y_i)),
\end{equation}
and we shorten $ f_i(\boldsymbol{\beta}; (\mathbf{x}_i,y_i))$ as $f_i(\boldsymbol{\beta})$ to simplify the notation.

As a concrete example, for the linear regression model, the classical least squares method is to find coefficient $\boldsymbol{\beta}$ that minimizes the objective function
\begin{equation}
\label{ch3:eq:sec2eq2}
        \min_{\boldsymbol{\beta}} F(\boldsymbol{\beta}) =\frac{1}{2n}\sum_{i=1}^n\left(\mathbf{x}_i^T\boldsymbol{\beta} - y_i\right)^2 =\frac{1}{2n}\|X\boldsymbol{\beta} - y\|_2^2,
\end{equation}
where we denote $X = [\mathbf{x}_1, \ldots , \mathbf{x}_n]^T\in \mathbbm{R}^{n\times p}$, and $\mathbf{y} = [y_1,\ldots,y_n]^T \in \mathbbm{R}^{n}$. 

When problem \eqref{ch3:eq:sec2eq2} has non-unique solutions, it is important yet nontrivial to find a solution that has nice generalization property. It is well known that  
when running Gradient Descent algorithm with initialization $\boldsymbol{\beta}_0 = \mathbf{0}$ on \eqref{ch3:eq:sec2eq2}, the corresponding solution is the minimal $\ell_2$ norm solution among all solutions of \eqref{ch3:eq:sec2eq2}, see \cite{wu2021direction}. Also, the properties of SMD are studied in \cite{azizan2018stochastic}. It is unknown what happens to the solution if we run variants of SMD, for example, variance reduced SMD.

\subsection{VRSMD Algorithm}\label{ch3:sec:02.02}
Let us now present the main idea of the variance reduced stochastic mirror descent (VRSMD) algorithm as follows.

To understand why we need variance reduction, consider the Stochastic Mirror Descent(SMD) algorithm using a strictly convex and differentiable mirror map $\psi(\cdot)$. At step $t$, the SMD updates $\boldsymbol{\beta}_{t+1}$ such that
\begin{equation*}
\nabla\psi(\boldsymbol{\beta}_{t+1}) = \nabla \psi(\boldsymbol{\beta}_{t}) - \eta_t \nabla f_{i_t}(\boldsymbol{\beta}_{t}),
\end{equation*}
where $i_t$ is randomly sampled from $\{1,\ldots,n\}$. Now the term $\nabla f_{i_t}(\boldsymbol{\beta}_{t})$ has $\mathbbm{E}[\nabla f_{i_t}(\boldsymbol{\beta}_{t})] = \nabla F(\boldsymbol{\beta}_{t})$, so SMD has unbiased update compared to Mirror Descent, where the update is $\nabla\psi(\boldsymbol{\beta}_{t+1}) = \nabla \psi(\boldsymbol{\beta}_{t}) - \eta_t \nabla F(\boldsymbol{\beta}_{t})$. However, in general $\Var[\nabla f_{i_t}(\boldsymbol{\beta}_{t})]\neq 0$ for any $\boldsymbol{\beta}_{t}$, so we need $\eta _t\to 0$, which 
may lead to slow convergence.

Variance reduction addresses the issues above by replacing $\nabla f_{i_t}(\boldsymbol{\beta}_{t})$ with term $A_{t}$ such that
\begin{itemize}
    \item $\mathbbm{E}[A_t]  = \mathbbm{E}[\nabla f_{i_t}(\boldsymbol{\beta}_{t})] $ to keep unbiased update;
    \item $\Var[A_t]  < \Var[\nabla f_{i_t}(\boldsymbol{\beta}_{t})] $ to control variance.
\end{itemize}
One choice of $A_t$ is $A_t = \nabla f_{i_t}(\boldsymbol{\beta}_{t}) - B_t + \mathbbm{E}[B_t]$, where $B_t$ and $\nabla f_{i_t}(\boldsymbol{\beta}_{t})$ are positively correlated with correlation coefficient $r > 0.5$ and $\Var[B_t]\approx \Var[\nabla f_{i_t}(\boldsymbol{\beta}_{t})]$. For this $A_t$ one can check that $\mathbbm{E}[A_t] = \mathbbm{E}[\nabla f_{i_t}(\boldsymbol{\beta}_{t})]$ and $\Var[A_t] = \Var[\nabla f_{i_t}(\boldsymbol{\beta}_{t}) - B_t] =\Var[\nabla f_{i_t}(\boldsymbol{\beta}_{t})] - 2r \sqrt{\Var[\nabla f_{i_t}(\boldsymbol{\beta}_{t})]\Var[B_t]} + \Var[B_t] <\Var[\nabla f_{i_t}(\boldsymbol{\beta}_{t})] $. For a proper $B_t$ such that $\Var(A_t) \stackrel{t\to\infty}{\longrightarrow} 0$, the algorithm converges for a fixed $\eta$. 

For illustration purpose, we use the variance reduction technique in \cite{johnson2013accelerating} to get the VRSMD Algorithm \ref{ch3:algo:SMD}. However, one should note that this framework applies to other variance reduction methods such as SARAH \cite{pmlr-v70-nguyen17b} and SPIDER \cite{NEURIPS2018_1543843a}. 

\begin{algorithm}[ht]
\textbf{Input}: An objective function $F(\cdot) = \frac{1}{n}\sum_{i=1}^n f_i(\cdot)$, and a strictly convex and differentiable mirror map $\psi(\cdot)$\;
\textbf{Initialization}: Initialize $\Tilde{\boldsymbol{\beta}}^{0}$. Choose the step-size $\eta$, outer iteration number $S$, inner iteration number $m$. Denote the estimator at $t$th inner iteration of $s$th outer iteration as $\boldsymbol{\beta}_{t}^{s}$. Set $\boldsymbol{\beta}_{1}^{1} = \boldsymbol{\beta}_{m+1}^{0} = \Tilde{\boldsymbol{\beta}}^{0}$\;
\For{Outer iteration s = 1,\ldots,S}{
Calculate $\nabla F(\Tilde{\boldsymbol{\beta}}^{s-1})$\;
\For{Inner iteration t = 1,\ldots,m}{
Randomly sample $i_t$ from $\{1,\ldots,n\}$, calculate
\begin{equation}
\label{ch3:eq:SMD_update}
\mathbf{v}_{t}^{s} =\nabla f_{i_t}(\boldsymbol{\beta}_{t}^{s}) - \nabla f_{i_t}(\Tilde{\boldsymbol{\beta}}^{s-1}) + \nabla F(\Tilde{\boldsymbol{\beta}}^{s-1}),
\end{equation}
and update $\boldsymbol{\beta}_{t+1}^{s}$ such that 
\begin{equation}
\nabla\psi(\boldsymbol{\beta}_{t+1}^{s}) = \nabla\psi(\boldsymbol{\beta}_{t}^{s}) - \eta \mathbf{v}_{t}^{s}.
\end{equation}
}
Set $\Tilde{\boldsymbol{\beta}}^{s}$ to be a uniform random sample from $\{\boldsymbol{\beta}_{1}^{s},\ldots,\boldsymbol{\beta}_{m}^{s}\}$\;
\textbf{Option I}: Set $\boldsymbol{\beta}_{1}^{s+1} = \boldsymbol{\beta}_{m+1}^{s}$\;
\textbf{Option II}: Set $\boldsymbol{\beta}_{1}^{s+1} = \Tilde{\boldsymbol{\beta}}^{s}$\;
}
\textbf{Option I}: Output $\boldsymbol{\beta}_a$ chosen uniformly random from $\{\{\boldsymbol{\beta}_t^{s}\}_{t=1}^{m}\}_{s = 1}^{S}$\;
\textbf{Option II}: Output $\boldsymbol{\beta}_a = \Tilde{\boldsymbol{\beta}}^{S}$.
\caption{Variance Reduced Stochastic Mirror Descent (VRSMD)
}
\label{ch3:algo:SMD}
\end{algorithm}
\begin{remark}
We note the complexity of VRSMD as follows: The total number of 
stochastic-first-order calls (i.e. SFO complexity) of Algorithm \ref{ch3:algo:SMD} is $\mathcal{O}(nS+mS)$. A popular choice of the inner loop number $m$ is $\Theta(n)$, which leads to $\mathcal{O}(1)$ SFO complexity per inner loop. 
\end{remark}

The variance reduction component of VRSMD is $\mathbf{v}_{t}^{s}$ in \eqref{ch3:eq:SMD_update}. Note that it is equivalent to the variance reduction scheme we talked above by taking $B_t = \nabla f_{i_t}(\Tilde{\boldsymbol{\beta}}^{s-1})$. 
The conditions we list there on $B_t$ hold when $\boldsymbol{\beta}_{t}^{s}$ and $\Tilde{\boldsymbol{\beta}}^{s-1}$ are close, which happens by taking moderate values of the inner iteration number $m$ and the step-size $\eta$.

We show that the VRSMD algorithm is a generalization of the SVRG algorithm in \cite{johnson2013accelerating,reddi2016stochastic}: For the special case of $\psi(\cdot) = \frac{1}{2} \|\cdot\|_2^2$, we have $\nabla\psi(\boldsymbol{\beta}) =\boldsymbol{\beta}$, then \eqref{ch3:eq:SMD_update} updates $  \boldsymbol{\beta}_{t+1}^{s} $ as:
\begin{equation*}
    \boldsymbol{\beta}_{t+1}^{s} = \boldsymbol{\beta}_{t}^{s} - \eta \mathbf{v}_{t}^{s},
\end{equation*}
which is the SVRG update, and VRSMD reduces to SVRG.

\section{Implicit Regularization}\label{ch3:sec:03}
In this section, we present the implicit regularization property of the VRSMD solution. To do so, it is necessary to first show that the VRSMD converges.  

To begin with, we introduce some definitions that will be useful in our theory. 
\begin{definition}[L-smoothness]\label{ch3:def:smooth}
$f$ is $L$-smooth with respect to $\|\cdot\|$ norm if there exists a constant $L>0$ such that 
$$\|\nabla f(\mathbf{u}) - \nabla f(\mathbf{w})\|_* \leq L\|\mathbf{u} - \mathbf{w}\|, \forall \mathbf{u},\mathbf{w},$$
where $\|\cdot\|_*:=\max_{y:\|y\|=1}\langle y,\cdot\rangle$ is the dual norm of $\|\cdot\|$.
\end{definition}
\begin{definition}[$\alpha$-strongly convex]\label{ch3:def:convex}
$f$ is $\alpha$-strongly convex with respect to $\|\cdot\|$ norm if there exists a constant $\alpha >0$ such that 
$$f(\mathbf{u})\geq f(\mathbf{w}) + \nabla f(\mathbf{w}) ^T (\mathbf{u} - \mathbf{w}) + \frac{\alpha}{2}\|\mathbf{u} - \mathbf{w}\|^2, \forall \mathbf{u},\mathbf{w}.$$
\end{definition}

\begin{definition}[Quadratic growth (QG)]\label{ch3:del:qg}
Let $\mathcal{X}$ be the set of all minimizers of $f$. 
$f$ satisfies QG condition w.r.t. $\|\cdot\|$ if
\begin{equation}\label{ch3:eq:QG}
    \frac{\mu}{2}\|\mathbf{u} - P_{\mathcal{X}}\mathbf{u}\|^2\leq  f(\mathbf{u}) - f(P_{\mathcal{X}}\mathbf{u}), \forall\mathbf{u}.
\end{equation}

\end{definition}


\begin{definition}[$\epsilon$-solution]\label{ch3:def:approx_opt}
For the optimization problem
\begin{align}
\begin{split}\label{ch3:eq:optmization_problem}
Opt = \min_{x\in B}\{ f(x): g_i(x)\leq 0, 1\leq i\leq m\}.
\end{split}
\end{align}
$x_{\epsilon}\in B$ is called an $\epsilon$-solution to \eqref{ch3:eq:optmization_problem} if
\begin{align*}
    \begin{split}
        &f(x_{\epsilon}) - Opt \leq \epsilon,\\
        &g_i(x_{\epsilon})\leq \epsilon, 1\leq i\leq m.
    \end{split}
\end{align*}
\end{definition}

\begin{definition}[Restricted eigenvalue (RE)]\label{ch3:def:RE}
$X$ satisfies $(s,\gamma)$-RE condition if for any $ \boldsymbol{\beta}$ such that $\|\boldsymbol{\beta}\|_0 \leq s$ we have

\begin{equation*}
    \frac{\frac{1}{n}\|X\boldsymbol{\beta}\|_2^2}{\|\boldsymbol{\beta}\|_2^2} \geq \gamma .
\end{equation*}
\end{definition}

\begin{definition}[$s$-good]\label{ch3:def:sgood}
A matrix $X_{n\times p}$ is $s$-good if $\exists \kappa < \frac{1}{2}$ such that $\forall \mathbf{u}\in \mathcal{N}(X)\subset \mathbbm{R}^p$ and $\forall I\subset\{1,\ldots,p\}$ with $|I|\leq s$, we have
\begin{center}
 $\|\mathbf{u}_{I}\|_1 \leq \kappa\|\mathbf{u}\|_1$.
\end{center}
\end{definition}

Next, let us present the convergence result of VRSMD:
\begin{proposition}\label{CH3:THM:02}
Assume $F(\cdot) = \frac{1}{n}\sum_{i=1}^n f_i(\cdot)$ has every $f_i(\cdot)$ convex and $L$-smooth w.r.t. an arbitrary norm $\|\cdot\|$, and $\psi(\cdot)$ is $\alpha$-strongly convex w.r.t. $\|\cdot\|$. Denote $\boldsymbol{\beta}^{*} = \arg\min F(\cdot)$. 

\textbf{(a)} Run Option I of Algorithm \ref{ch3:algo:SMD} on $F$ with $\eta <\frac{\alpha}{24L}$, then we have 
\begin{align}
    \begin{split}
    \label{ch3:eq:thm1eq1}
        \mathbbm{E}[F(\boldsymbol{\beta}_a) - F(\boldsymbol{\beta}^*)] \leq \frac{\alpha}{(\alpha\eta-24L\eta^2)T}\times\\
        \quad\left[D_{\psi}(\boldsymbol{\beta}^*,\Tilde{\boldsymbol{\beta}}^{0})+ \frac{12L\eta^2 m}{\alpha}(F(\Tilde{\boldsymbol{\beta}}^{0}) - F(\boldsymbol{\beta}^*))\right],
    \end{split}
\end{align}
where $T = m\cdot S$, and 
    $D_{\psi}(\boldsymbol{\beta}^*,\Tilde{\boldsymbol{\beta}}^{0}) := \psi(\boldsymbol{\beta}^*) - \psi(\Tilde{\boldsymbol{\beta}}^{0}) - \langle\nabla \psi(\Tilde{\boldsymbol{\beta}}^{0}), \boldsymbol{\beta}^*-\Tilde{\boldsymbol{\beta}}^{0}\rangle$ is the Bregman divergence.

\textbf{(b)} If we further assume that $F(\cdot)$ satisfies the QG condition in \eqref{ch3:eq:QG} with constant $\mu$, and that $\psi(\cdot)$ is $\ell$-smooth, all w.r.t. $\|\cdot\|$, and also suppose that we run Option II of Algorithm \ref{ch3:algo:SMD} with a large enough $m$ such that 
\begin{equation}\label{ch3:eq:thm1eq3}
\tau := \frac{12L\eta^2 /\alpha + \ell/(m\mu)}{\eta -12L\eta^2/\alpha} <1,
\end{equation}
then the VRSMD has a stronger linear convergence rate:
\begin{equation}\label{ch3:eq:thm2_linear}
    \mathbbm{E}[F(\boldsymbol{\beta}_a) - F(\boldsymbol{\beta}^*)] \leq \tau^S [F(\Tilde{\boldsymbol{\beta}}^0) - F(\boldsymbol{\beta}^*)].
\end{equation}
\end{proposition}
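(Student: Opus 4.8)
The plan is to transplant the standard SVRG convergence analysis of \cite{johnson2013accelerating,reddi2016stochastic} from the Euclidean to the mirror-descent setting, replacing the squared Euclidean distance by the Bregman divergence $D_\psi$. The backbone is a one-step descent inequality. Fixing an outer iteration $s$ and an inner iteration $t$, and using that the update reads $\nabla\psi(\boldsymbol{\beta}_{t+1}^s)=\nabla\psi(\boldsymbol{\beta}_{t}^s)-\eta\mathbf{v}_t^s$, I would invoke the three-point identity for Bregman divergences, which for any minimizer $\boldsymbol{\beta}^*$ gives
\begin{equation*}
\eta\langle\mathbf{v}_t^s,\boldsymbol{\beta}_t^s-\boldsymbol{\beta}^*\rangle = D_\psi(\boldsymbol{\beta}^*,\boldsymbol{\beta}_t^s) - D_\psi(\boldsymbol{\beta}^*,\boldsymbol{\beta}_{t+1}^s) - D_\psi(\boldsymbol{\beta}_{t+1}^s,\boldsymbol{\beta}_t^s) - \eta\langle\mathbf{v}_t^s,\boldsymbol{\beta}_{t+1}^s-\boldsymbol{\beta}_t^s\rangle.
\end{equation*}
Lower-bounding $D_\psi(\boldsymbol{\beta}_{t+1}^s,\boldsymbol{\beta}_t^s)\ge\frac{\alpha}{2}\|\boldsymbol{\beta}_{t+1}^s-\boldsymbol{\beta}_t^s\|^2$ by $\alpha$-strong convexity of $\psi$ and controlling the last cross term by Young's inequality in the dual-norm pair, the $\|\boldsymbol{\beta}_{t+1}^s-\boldsymbol{\beta}_t^s\|^2$ contributions cancel and I am left with $\eta\langle\mathbf{v}_t^s,\boldsymbol{\beta}_t^s-\boldsymbol{\beta}^*\rangle \le D_\psi(\boldsymbol{\beta}^*,\boldsymbol{\beta}_t^s) - D_\psi(\boldsymbol{\beta}^*,\boldsymbol{\beta}_{t+1}^s) + \frac{\eta^2}{2\alpha}\|\mathbf{v}_t^s\|_*^2$. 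Taking the conditional expectation over $i_t$, the unbiasedness $\mathbbm{E}[\mathbf{v}_t^s]=\nabla F(\boldsymbol{\beta}_t^s)$ together with convexity of $F$ turns the left side into the lower bound $\eta(F(\boldsymbol{\beta}_t^s)-F(\boldsymbol{\beta}^*))$.

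The crux is then bounding the second moment $\mathbbm{E}\|\mathbf{v}_t^s\|_*^2$. I would rewrite $\mathbf{v}_t^s$ using $\nabla F(\boldsymbol{\beta}^*)=\mathbf{0}$ as a combination of gradient differences at $\boldsymbol{\beta}_t^s$ and $\tilde{\boldsymbol{\beta}}^{s-1}$ relative to $\boldsymbol{\beta}^*$, apply the triangle inequality together with $(a+b+c)^2\le 3(a^2+b^2+c^2)$, and then use the smoothness-convexity co-coercivity bound $\mathbbm{E}_i\|\nabla f_i(\mathbf{u})-\nabla f_i(\boldsymbol{\beta}^*)\|_*^2\le 2L(F(\mathbf{u})-F(\boldsymbol{\beta}^*))$, with Jensen applied to the full-gradient term. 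This yields $\mathbbm{E}\|\mathbf{v}_t^s\|_*^2 \le c_1 L(F(\boldsymbol{\beta}_t^s)-F(\boldsymbol{\beta}^*)) + c_2 L(F(\tilde{\boldsymbol{\beta}}^{s-1})-F(\boldsymbol{\beta}^*))$ for absolute constants $c_1,c_2$. I expect this to be the main obstacle: in a general normed space the usual variance-equals-second-moment-minus-mean-squared identity fails, so the reduction cannot lean on Hilbert-space orthogonality and must go through the cruder triangle-inequality route, which is precisely what inflates the constants to the $24L$ and $12L$ appearing in \eqref{ch3:eq:thm1eq1} and \eqref{ch3:eq:thm1eq3}. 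Substituting this into the one-step inequality and moving the $F(\boldsymbol{\beta}_t^s)$-proportional term to the left produces the per-step recursion
\begin{equation*}
\Bigl(\eta-\tfrac{c_1 L\eta^2}{2\alpha}\Bigr)\mathbbm{E}[F(\boldsymbol{\beta}_t^s)-F(\boldsymbol{\beta}^*)] \le \mathbbm{E}[D_\psi(\boldsymbol{\beta}^*,\boldsymbol{\beta}_t^s)] - \mathbbm{E}[D_\psi(\boldsymbol{\beta}^*,\boldsymbol{\beta}_{t+1}^s)] + \tfrac{c_2 L\eta^2}{2\alpha}\mathbbm{E}[F(\tilde{\boldsymbol{\beta}}^{s-1})-F(\boldsymbol{\beta}^*)].
\end{equation*}

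For part \textbf{(a)}, I would sum this recursion over $t=1,\dots,m$ so the $D_\psi(\boldsymbol{\beta}^*,\cdot)$ terms telescope within the outer loop, then sum over $s=1,\dots,S$; under Option~I the identity $\boldsymbol{\beta}_1^{s+1}=\boldsymbol{\beta}_{m+1}^s$ makes the divergences telescope across outer loops as well, collapsing all boundary terms to $D_\psi(\boldsymbol{\beta}^*,\tilde{\boldsymbol{\beta}}^0)$. The residual $\tilde{\boldsymbol{\beta}}$-terms are handled by using that $\tilde{\boldsymbol{\beta}}^{s}$ is a uniform draw from the inner iterates, so $\mathbbm{E}[F(\tilde{\boldsymbol{\beta}}^{s})]=\frac1m\sum_{t}\mathbbm{E}[F(\boldsymbol{\beta}_t^{s})]$; summing over $s$ lets every $\tilde{\boldsymbol{\beta}}^{s}$ with $s\ge 1$ be absorbed into the left-hand double sum, leaving only the initial contribution $\frac{12L\eta^2 m}{\alpha}(F(\tilde{\boldsymbol{\beta}}^0)-F(\boldsymbol{\beta}^*))$ on the right. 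Finally, since the output $\boldsymbol{\beta}_a$ is uniform over all $T=mS$ inner iterates, $\mathbbm{E}[F(\boldsymbol{\beta}_a)-F(\boldsymbol{\beta}^*)]$ equals $\frac1T$ times that double sum, and dividing by the net coefficient $(\alpha\eta-24L\eta^2)/\alpha$ yields \eqref{ch3:eq:thm1eq1}; the hypothesis $\eta<\alpha/(24L)$ is exactly what keeps this coefficient positive.

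For part \textbf{(b)}, the extra structure lets me close a geometric recursion across outer loops. Here I would track the divergence to the projection $\boldsymbol{\beta}^*=P_{\mathcal{X}}(\boldsymbol{\beta}_1^s)$ of the starting point onto the solution set rather than to a fixed minimizer; this is legitimate since the one-step inequality holds for any minimizer and all minimizers share the value $F(\boldsymbol{\beta}^*)$. Summing the per-step recursion over the inner loop and discarding the nonnegative terminal divergence leaves $m(\eta-\frac{c_1L\eta^2}{2\alpha})\mathbbm{E}[F(\tilde{\boldsymbol{\beta}}^s)-F(\boldsymbol{\beta}^*)]$ on the left (using the uniform-draw identity and $\boldsymbol{\beta}_1^s=\tilde{\boldsymbol{\beta}}^{s-1}$ from Option~II), bounded by $\mathbbm{E}[D_\psi(\boldsymbol{\beta}^*,\tilde{\boldsymbol{\beta}}^{s-1})]$ plus the $\tilde{\boldsymbol{\beta}}^{s-1}$-term. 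The key conversion is $D_\psi(\boldsymbol{\beta}^*,\tilde{\boldsymbol{\beta}}^{s-1})\le\frac{\ell}{2}\|\boldsymbol{\beta}^*-\tilde{\boldsymbol{\beta}}^{s-1}\|^2\le\frac{\ell}{\mu}(F(\tilde{\boldsymbol{\beta}}^{s-1})-F(\boldsymbol{\beta}^*))$, where the first step is $\ell$-smoothness of $\psi$ and the second is the QG condition applied at $\tilde{\boldsymbol{\beta}}^{s-1}$ with $\boldsymbol{\beta}^*=P_{\mathcal{X}}(\tilde{\boldsymbol{\beta}}^{s-1})$. This makes the right-hand side proportional to $\mathbbm{E}[F(\tilde{\boldsymbol{\beta}}^{s-1})-F(\boldsymbol{\beta}^*)]$, and dividing through gives the one-loop contraction $\mathbbm{E}[F(\tilde{\boldsymbol{\beta}}^s)-F(\boldsymbol{\beta}^*)]\le\tau\,\mathbbm{E}[F(\tilde{\boldsymbol{\beta}}^{s-1})-F(\boldsymbol{\beta}^*)]$ with $\tau$ as in \eqref{ch3:eq:thm1eq3}; iterating $S$ times and using $\boldsymbol{\beta}_a=\tilde{\boldsymbol{\beta}}^S$ delivers \eqref{ch3:eq:thm2_linear}. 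The requirement $\tau<1$, enforced by taking $m$ large, is what makes this a genuine contraction.
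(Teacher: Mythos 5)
Your proposal is correct and follows essentially the same route as the paper: the Bregman three-point expansion of $D_\psi(\boldsymbol{\beta}^*,\boldsymbol{\beta}_{t+1}^s)$, the $(a+b+c)^2\le 3(a^2+b^2+c^2)$ plus co-coercivity bound giving $\mathbbm{E}\|\mathbf{v}_t^s\|_*^2\le 12L[F(\boldsymbol{\beta}_t^s)-F(\boldsymbol{\beta}^*)+F(\tilde{\boldsymbol{\beta}}^{s-1})-F(\boldsymbol{\beta}^*)]$, telescoping with the uniform-draw identity for (a), and the $\ell$-smoothness-plus-QG conversion $D_\psi(\boldsymbol{\beta}^*,\tilde{\boldsymbol{\beta}}^{s-1})\le\frac{\ell}{\mu}(F(\tilde{\boldsymbol{\beta}}^{s-1})-F(\boldsymbol{\beta}^*))$ for the contraction in (b). The only cosmetic difference is that you control the cross term $\eta\langle\mathbf{v}_t^s,\boldsymbol{\beta}_t^s-\boldsymbol{\beta}_{t+1}^s\rangle$ by Young's inequality against $D_\psi(\boldsymbol{\beta}_{t+1}^s,\boldsymbol{\beta}_t^s)$ (yielding $\tfrac{\eta^2}{2\alpha}\|\mathbf{v}_t^s\|_*^2$), whereas the paper uses the strong-convexity co-coercivity of $\nabla\psi$ and drops $D_\psi(\boldsymbol{\beta}_{t+1}^s,\boldsymbol{\beta}_t^s)$ (yielding $\tfrac{\eta^2}{\alpha}\|\mathbf{v}_t^s\|_*^2$); your variant is a factor of two tighter in that intermediate constant and therefore still implies the stated bounds.
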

\begin{remark}
We analyze the computational complexity implied by Proposition \ref{CH3:THM:02}: 
In (a), let $m=n$ and take $\eta = \frac{\alpha}{48L}$, then we have 
\begin{align*}
    \mathbbm{E}&[F(\boldsymbol{\beta}_a) - F(\boldsymbol{\beta}^*)] \leq \frac{96L}{\alpha T}\times\\
    &\left[D_{\psi}(\boldsymbol{\beta}^*,\Tilde{\boldsymbol{\beta}}^{0})+ \frac{\alpha n}{192 L}(F(\Tilde{\boldsymbol{\beta}}^{0}) - F(\boldsymbol{\beta}^*))\right].
\end{align*}
In this case, the number of gradient computations for achieving an $\epsilon$-solution is $\mathcal{O}(\frac{L}{\epsilon}+ \frac{n}{\epsilon})$.
\end{remark}
%
%
\begin{remark}\label{ch3:remark:04}
The assumption in (b) is moderate. Take $m = \frac{110 L \ell}{\alpha\mu}$ and $\eta = \frac{\alpha}{36L}$, we have $\tau < 1$. 
This choice of $m$ does not violate the $\mathcal{O}(1)$ SFO comlexity per iteration -- 
take a good mirror map so that $\ell/\alpha = \mathcal{O}(1)$, and consider the most indicative case \cite{johnson2013accelerating} where the condition number $L/\mu = n$, we have $m = \Theta(n)$.
\end{remark}

Our results in Proposition \ref{CH3:THM:02} are consistent with those for SVRG. 
In part (a), the $\mathcal{O}(1/T)$ convergence rate matches the rate in \cite{reddi2016stochastic}; in part (b), the linear rate matches the rate in \cite{johnson2013accelerating}, while we reduce their strong convexity assumption to quadratic growth.

Finally, we are ready to present our main result on implicit regularization. 
In the following theorem, we show that VRSMD finds an $\epsilon$-solution of the minimum mirror interpolation problem. 
\begin{theorem} \label{CH3:THM:03}
For the objective function in \eqref{ch3:eq:sec2eq2}, assume $\psi(\boldsymbol{\beta})$ is $\alpha$-strongly convex w.r.t. $\|\cdot\|_2$, denote $L = \max_i\|\mathbf{x}_i\|_2^2$ and let $s_m$ be the smallest nonzero singular value of $X$. The VRSMD algorithm converges to the minimum mirror map interpolant 
\begin{align}
\begin{split}\label{ch3:eq:stronglyconvexobj2_special}
    \boldsymbol{\beta}^{\psi} :=\argmin_{\boldsymbol{\beta}}&\quad\psi(\boldsymbol{\beta})\\
    s.t. &\quad F(\boldsymbol{\beta}) = \min_{\boldsymbol{\beta}'} F(\boldsymbol{\beta}').
\end{split}
\end{align}
We describe the convergence by the following $\epsilon$-solution:

\textbf{(a)} Run Option I of Algorithm \ref{ch3:algo:SMD} with choice $\eta <\frac{\alpha}{24L}$ and initialization $\Tilde{\boldsymbol{\beta}}^{0} $ such that $\nabla \psi(\Tilde{\boldsymbol{\beta}}^{0})\in\mathrm{col}(X^T) $, assume that the output $\boldsymbol{\beta}^a$ satisfies $\|\nabla\psi(\boldsymbol{\beta}^a)\|_2 \leq B$, then we will have
\begin{align}
    \begin{split}\label{ch3:eq:thm3}
    &\mathbbm{E} [\psi (\boldsymbol{\beta}^a) - \psi(\boldsymbol{\beta}^{\psi})]\leq\frac{B}{s_m} \sqrt{\frac{\alpha}{(\alpha\eta-24L\eta^2)T}}\times\\
    &\left[2n D_{\psi}(\boldsymbol{\beta}^{\psi},\Tilde{\boldsymbol{\beta}}^{0})+ \frac{24 n L\eta^2 m}{\alpha}\left(F(\Tilde{\boldsymbol{\beta}}^{0}) - F(\boldsymbol{\beta}^{\psi})\right)\right]^{.5},
    \end{split}
\end{align}
which describes how far is the objective value at $\boldsymbol{\beta}^a$ away from the optimal solution to \eqref{ch3:eq:stronglyconvexobj2_special}. 
Moreover, we have
\begin{align}
    \begin{split}\label{ch3:eq:13}
    &\mathbbm{E}[F(\boldsymbol{\beta}^{a}) - F(\boldsymbol{\beta}^{\psi})] 
    \leq \frac{\alpha}{(\alpha\eta-8L\eta^2)T} \times\\&\left[ D_{\psi}(\boldsymbol{\beta}^{\psi},\Tilde{\boldsymbol{\beta}}^{0})+ \frac{12L\eta^2 m}{\alpha}\left(F(\Tilde{\boldsymbol{\beta}}^{0}) -  F(\boldsymbol{\beta}^{\psi})\right)\right],
    \end{split}
\end{align}
which characterizes how much does $\boldsymbol{\beta}^a$ violates the constraints of \eqref{ch3:eq:stronglyconvexobj2_special}. They together show that the VRSMD algorithm finds an $\epsilon$-solution to \eqref{ch3:eq:stronglyconvexobj2_special} for $T= \mathcal{O}(\frac{1}{\epsilon} + \frac{1}{\epsilon^2})$.

\textbf{(b)} Further assume that $\psi(\cdot)$ is $\ell$-smooth w.r.t. $\|\cdot\|_2$ and 
\begin{equation}
    \tau' = \frac{ 12L\eta^2 /\alpha + \ell n/(m s_m^2)}{\eta - 12L\eta^2/\alpha} < 1.
\end{equation}
Run Option II of Algorithm \ref{ch3:algo:SMD}, we can show that:
\begin{align}
    \mathbbm{E} [\psi (\boldsymbol{\beta}^a) - \psi(\boldsymbol{\beta}^{\psi})]&\leq \frac{B(\tau')^{S/2}\sqrt{2n}}{s_m} \sqrt{F(\Tilde{\boldsymbol{\beta}}^{0})-  F(\boldsymbol{\beta}^{\psi})},\nonumber\\
    \mathbbm{E}[F(\boldsymbol{\beta}^{a}) -F(\boldsymbol{\beta}^{\psi})] &\leq (\tau')^S\left(F(\Tilde{\boldsymbol{\beta}}^{0}) -F(\boldsymbol{\beta}^{\psi})\right).\label{ch3:eq:thm3:part2}
\end{align}

\end{theorem}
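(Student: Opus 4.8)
The plan is to combine a conserved geometric structure of the VRSMD iterates with the convergence guarantee already established in Proposition \ref{CH3:THM:02}. First I would characterize the target $\boldsymbol{\beta}^\psi$ through its optimality conditions. Since $F$ in \eqref{ch3:eq:sec2eq2} is a convex quadratic, its minimizer set is the affine space $\{\boldsymbol{\beta} : X\boldsymbol{\beta} = P_{\mathrm{col}(X)}\mathbf{y}\}$, so the program \eqref{ch3:eq:stronglyconvexobj2_special} minimizes a strongly convex $\psi$ subject to a linear constraint. Stationarity of the Lagrangian then forces $\nabla\psi(\boldsymbol{\beta}^\psi)\in\mathrm{col}(X^T)$, and strong convexity of $\psi$ makes $\boldsymbol{\beta}^\psi$ unique.

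The crucial step is an invariance argument explaining why the iterates are biased toward $\boldsymbol{\beta}^\psi$. For the least-squares loss one has $\nabla f_i(\boldsymbol{\beta}) = \mathbf{x}_i(\mathbf{x}_i^T\boldsymbol{\beta}-y_i)$ and $\nabla F(\boldsymbol{\beta}) = \frac{1}{n}X^T(X\boldsymbol{\beta}-\mathbf{y})$, all of which lie in $\mathrm{col}(X^T)$; hence so does every variance-reduced direction $\mathbf{v}_t^s$ in \eqref{ch3:eq:SMD_update}. Because the update obeys $\nabla\psi(\boldsymbol{\beta}_{t+1}^s)=\nabla\psi(\boldsymbol{\beta}_t^s)-\eta\mathbf{v}_t^s$ and the initialization satisfies $\nabla\psi(\tilde{\boldsymbol{\beta}}^0)\in\mathrm{col}(X^T)$, a straightforward induction gives $\nabla\psi(\boldsymbol{\beta})\in\mathrm{col}(X^T)$ for every iterate, in particular for the output $\boldsymbol{\beta}^a$. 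This matches the dual coordinate satisfied by $\boldsymbol{\beta}^\psi$.

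With these two facts I would split the claim into an objective-gap bound and a potential-gap bound. The objective gaps \eqref{ch3:eq:13} and the second line of \eqref{ch3:eq:thm3:part2} follow by applying Proposition \ref{CH3:THM:02} with the particular minimizer $\boldsymbol{\beta}^\psi$ in place of $\boldsymbol{\beta}^*$, specializing the analysis to the least-squares $F$ (which sharpens the constant in \eqref{ch3:eq:13}); for part (b) I would identify the QG constant as $\mu=s_m^2/n$, since for $\boldsymbol{\beta}-P_{\mathcal{X}}\boldsymbol{\beta}\perp\mathcal{N}(X)$ one has $\|X(\boldsymbol{\beta}-P_{\mathcal{X}}\boldsymbol{\beta})\|_2^2\geq s_m^2\|\boldsymbol{\beta}-P_{\mathcal{X}}\boldsymbol{\beta}\|_2^2$, which explains the $\ell n/(m s_m^2)$ term in $\tau'$. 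For the potential gap I would begin from convexity of $\psi$, namely $\psi(\boldsymbol{\beta}^a)-\psi(\boldsymbol{\beta}^\psi)\leq\langle\nabla\psi(\boldsymbol{\beta}^a),\boldsymbol{\beta}^a-\boldsymbol{\beta}^\psi\rangle$, then write $\nabla\psi(\boldsymbol{\beta}^a)=X^T\mathbf{w}$ with $\mathbf{w}\in\mathrm{col}(X)$ of minimal norm so that $\|\mathbf{w}\|_2\leq\|\nabla\psi(\boldsymbol{\beta}^a)\|_2/s_m\leq B/s_m$. Cauchy--Schwarz gives $\langle X^T\mathbf{w},\boldsymbol{\beta}^a-\boldsymbol{\beta}^\psi\rangle\leq\|\mathbf{w}\|_2\,\|X(\boldsymbol{\beta}^a-\boldsymbol{\beta}^\psi)\|_2$, and since the residual $X\boldsymbol{\beta}^\psi-\mathbf{y}$ is orthogonal to $\mathrm{col}(X)$, Pythagoras yields the exact identity $F(\boldsymbol{\beta}^a)-F(\boldsymbol{\beta}^\psi)=\frac{1}{2n}\|X(\boldsymbol{\beta}^a-\boldsymbol{\beta}^\psi)\|_2^2$. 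Combining these and applying Jensen's inequality to move the expectation inside the square root converts the objective-gap bound into \eqref{ch3:eq:thm3} and the first line of \eqref{ch3:eq:thm3:part2}.

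The main obstacle I anticipate is this potential-gap step rather than the bookkeeping: one must simultaneously invoke the invariance (to guarantee $\nabla\psi(\boldsymbol{\beta}^a)\in\mathrm{col}(X^T)$), the smallest-singular-value bound on the correct subspace $\mathrm{col}(X)$ (to control $\|\mathbf{w}\|_2$ by $B/s_m$), and the Pythagorean identity (to convert the primal displacement $\|X(\boldsymbol{\beta}^a-\boldsymbol{\beta}^\psi)\|_2$ into $\sqrt{2n(F(\boldsymbol{\beta}^a)-F(\boldsymbol{\beta}^\psi))}$) before taking expectations. Getting the subspace restriction on $\mathbf{w}$ right, so that the singular-value bound legitimately produces the factor $1/s_m$, is the delicate point, and the hypothesis $\|\nabla\psi(\boldsymbol{\beta}^a)\|_2\leq B$ is exactly what makes this clean.
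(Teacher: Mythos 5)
Your proposal is correct and follows essentially the same route as the paper's proof: propagate $\nabla\psi(\boldsymbol{\beta}_t^s)\in\mathrm{col}(X^T)$ by induction, apply Proposition \ref{CH3:THM:02} with $\boldsymbol{\beta}^*=\boldsymbol{\beta}^{\psi}$, use the Pythagorean identity to rewrite the objective gap as $\frac{1}{2n}\|X(\boldsymbol{\beta}^a-\boldsymbol{\beta}^{\psi})\|_2^2$, and convert to the potential gap via convexity of $\psi$, the smallest-singular-value bound, and Jensen. The only cosmetic differences are that you verify QG directly with $\mu=s_m^2/n$ where the paper goes through the PL inequality, and you phrase the $B/s_m$ step through a minimal-norm preimage $\mathbf{w}$ with $\nabla\psi(\boldsymbol{\beta}^a)=X^T\mathbf{w}$ rather than projecting $\boldsymbol{\beta}^a-\boldsymbol{\beta}^{\psi}$ onto $\mathrm{col}(X^T)$; both yield the identical bound.
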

\begin{remark} 
We need to point out that the assumptions in Theorem \ref{CH3:THM:03} are moderate. 
For instance, for the assumption that $\nabla \psi(\Tilde{\boldsymbol{\beta}}^{0})\in\mathrm{col}(X^T)$, we can take $\Tilde{\boldsymbol{\beta}}^{0} = (\nabla \psi)^{-1}(X^T \mathbf{a})$ for any $\mathbf{a}$. 
One feasible choice is $\mathbf{a} = \mathbf{0}$, resulting in $\tilde{\boldsymbol{\beta}}_0 = \arg\min_{\boldsymbol{\beta}\in R^p}\psi(\boldsymbol{\beta})$. 
Since $\psi$ is strongly convex, this minimizer is not hard to calculate, for example: we have \\
$\cdot \psi(\cdot) = \|\cdot\|_{q}^q$ or $\psi(\cdot) = \|\cdot\|_{q}^2$ for $q> 1 \Rightarrow$ $\arg\min \psi(\cdot) = \mathbf{0}$;\\
$\cdot \psi(\boldsymbol{\beta}) = \boldsymbol{\beta}^T H \boldsymbol{\beta}$ for a positive definite $H$ $\Rightarrow \arg\min \psi(\cdot) = \mathbf{0}$;\\
$\cdot \psi(\boldsymbol{\beta}) = \sum_{i=1}^p {\beta}_i\log({\beta}_i) - {\beta}_i$ $\Rightarrow\arg\min \psi(\cdot) = \mathbf{1}$.
    
\end{remark}
\begin{remark}
As for the assumption in (b), we can take $m = \frac{110 L \ell n}{\alpha s_m^2}$ and $\eta = \frac{\alpha}{36L}$ to get 
$\tau' = (1 + 108/110)/2 < 1$. Take a good mirror map such that $\ell/\alpha = \mathcal{O}(1)$ and assume 
 $L/s_m^2 = \mathcal{O}(1)$, we further have $m =\Theta(n)$, so the algorithm can be implemented efficiently.
\end{remark}

It is useful to provide a high level understanding of Theorem \ref{CH3:THM:03}. 
It implies that the discrete update of the VRSMD Algorithm on the unregularized objective \eqref{ch3:eq:sec2eq2} is an $\epsilon$-solution of the regularized optimization problem \eqref{ch3:eq:stronglyconvexobj2_special}, so it is an implicit regularization result. 
Furthermore, since \eqref{ch3:eq:stronglyconvexobj2_special} minimizes a strictly convex function over a convex set, the solution will be unique, thus the estimator from VRSMD must converge to this unique solution. 
The finding in Theorem \ref{CH3:THM:03} can be further extended to other variants of SMD.

\section{Further Understanding of Implicit Regularization}\label{ch3:sec:04}
In this section, we provide a deeper understanding of our theoretical results in the previous section by analyzing two special mirror maps for linear regression model: one is $\psi(\boldsymbol{\beta}) = \|\boldsymbol{\beta}\|_{2}^{2}/2$, the other is $\psi(\boldsymbol{\beta}) = \|\boldsymbol{\beta}\|_{1+\delta}^{1+\delta}$ for small $\delta>0$.

Let us first consider $\psi(\cdot) = \|\cdot\|_2^2/2$, where VRSMD reduces to SVRG algorithm in \cite{johnson2013accelerating}. In this case, we further show that $\|\boldsymbol{\beta}^a - \boldsymbol{\beta}^{\psi}\|_2^2$ linearly converges to $0$, where $\boldsymbol{\beta}^{\psi}$ is the minimum $\ell_2$ norm solution $\boldsymbol{\beta}^{\psi} = (X^T X)^{+} X^T \mathbf{y} = X^+ \mathbf{y}$:

\begin{corollary}\label{CH3:COR:01}
Denote $L=\max_i\|\mathbf{x}_i\|_2^2$, take $\psi(\cdot) = \|\cdot\|_2^2/2$, let $\Tilde{\boldsymbol{\beta}}^{0} = \boldsymbol{\beta}_{m}^{0} = \mathbf{0}$, $\eta_t = \eta <1/(24L)$, and assume
\begin{equation}
\tau'' = \frac{ 12L\eta^2  + n/(m s_m^2)}{\eta - 12L\eta^2} <1.
\end{equation}
Run Option II of Algorithm \ref{ch3:algo:SMD} on \eqref{ch3:eq:sec2eq2}, we have
\begin{equation}
\mathbbm{E}\|\boldsymbol{\beta}^a -X^+ \mathbf{y}\|_2^2\leq  \frac{(\tau'')^S}{s_m^2}\|P_{\mathrm{col}(X)}\mathbf{y}\|_2^2.
\end{equation}
\end{corollary}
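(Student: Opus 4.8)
The plan is to derive this corollary directly from part (b) of Theorem \ref{CH3:THM:03} by exploiting the special structure of the Euclidean mirror map together with the geometry of the least-squares problem. First I would record that for $\psi(\cdot)=\|\cdot\|_2^2/2$ one has $\nabla\psi(\boldsymbol{\beta})=\boldsymbol{\beta}$, and that this map is both $1$-strongly convex and $1$-smooth with respect to $\|\cdot\|_2$; substituting $\alpha=\ell=1$ into the definition of $\tau'$ shows that $\tau'$ coincides with the $\tau''$ in the statement. Thus the two bounds of \eqref{ch3:eq:thm3:part2} are available, and it remains to convert the objective-gap bound $\mathbbm{E}[F(\boldsymbol{\beta}^a)-F(\boldsymbol{\beta}^\psi)]\le(\tau'')^S(F(\mathbf{0})-F(\boldsymbol{\beta}^\psi))$ into the claimed iterate bound.

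The central observation is an invariance: since $\nabla f_i(\boldsymbol{\beta})=\mathbf{x}_i(\mathbf{x}_i^T\boldsymbol{\beta}-y_i)\in\mathrm{col}(X^T)$ for every $i$ and $\nabla F(\boldsymbol{\beta})\in\mathrm{col}(X^T)$, each variance-reduced direction $\mathbf{v}_t^s$ lies in $\mathrm{col}(X^T)$; because $\nabla\psi$ is the identity and the iteration starts at $\boldsymbol{\beta}_m^0=\Tilde{\boldsymbol{\beta}}^0=\mathbf{0}\in\mathrm{col}(X^T)$, every iterate---and hence the output $\boldsymbol{\beta}^a$---stays in $\mathrm{col}(X^T)$. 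Since $\boldsymbol{\beta}^\psi=X^+\mathbf{y}$ also lies in $\mathrm{col}(X^T)$, the error $\boldsymbol{\beta}^a-\boldsymbol{\beta}^\psi$ belongs to $\mathrm{col}(X^T)=\mathcal{N}(X)^\perp$. On this subspace $X$ is bounded below by its smallest nonzero singular value, giving $\|X(\boldsymbol{\beta}^a-\boldsymbol{\beta}^\psi)\|_2^2\ge s_m^2\|\boldsymbol{\beta}^a-\boldsymbol{\beta}^\psi\|_2^2$, which I would verify by expanding $\boldsymbol{\beta}^a-\boldsymbol{\beta}^\psi$ in the right singular vectors of $X$ associated with nonzero singular values.

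Next I would relate the residual norm to the objective gap. Writing $X\boldsymbol{\beta}-\mathbf{y}=X(\boldsymbol{\beta}-\boldsymbol{\beta}^\psi)+(P_{\mathrm{col}(X)}\mathbf{y}-\mathbf{y})$ and noting that the two summands are orthogonal (the first lies in $\mathrm{col}(X)$, the second in its orthogonal complement), the Pythagorean identity yields $F(\boldsymbol{\beta})-F(\boldsymbol{\beta}^\psi)=\frac{1}{2n}\|X(\boldsymbol{\beta}-\boldsymbol{\beta}^\psi)\|_2^2$ for every $\boldsymbol{\beta}$. Combining this with the singular-value bound gives $\|\boldsymbol{\beta}^a-\boldsymbol{\beta}^\psi\|_2^2\le\frac{2n}{s_m^2}(F(\boldsymbol{\beta}^a)-F(\boldsymbol{\beta}^\psi))$. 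Taking expectations and invoking the objective bound from Theorem \ref{CH3:THM:03}(b) reduces everything to evaluating $F(\mathbf{0})-F(\boldsymbol{\beta}^\psi)$; a second application of orthogonality, $\|\mathbf{y}\|_2^2=\|P_{\mathrm{col}(X)}\mathbf{y}\|_2^2+\|(I-P_{\mathrm{col}(X)})\mathbf{y}\|_2^2$, shows this equals $\frac{1}{2n}\|P_{\mathrm{col}(X)}\mathbf{y}\|_2^2$, and the factors of $2n$ cancel to produce exactly the stated bound.

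The only real subtlety---the step I would treat most carefully---is the invariance argument that keeps the iterates inside $\mathrm{col}(X^T)$, since it is precisely what allows the smallest nonzero singular value $s_m$ (rather than a strictly positive smallest eigenvalue, which need not exist in the overparameterized regime) to control the error. Everything else is routine linear algebra once part (b) of Theorem \ref{CH3:THM:03} is in hand.
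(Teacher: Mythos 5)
Your proposal is correct and follows essentially the same route as the paper: specialize Theorem \ref{CH3:THM:03}(b) with $\alpha=\ell=1$ so that $\tau'=\tau''$, use the Pythagorean identity to identify the objective gap with $\frac{1}{2n}\|X(\boldsymbol{\beta}^a-\boldsymbol{\beta}^{\psi})\|_2^2$ and to evaluate $F(\mathbf{0})-F(\boldsymbol{\beta}^{\psi})=\frac{1}{2n}\|P_{\mathrm{col}(X)}\mathbf{y}\|_2^2$, and then invoke the column-space invariance of the iterates together with Lemma \ref{ch3:lem:4} to convert $\|X(\boldsymbol{\beta}^a-\boldsymbol{\beta}^{\psi})\|_2^2$ into $s_m^2\|\boldsymbol{\beta}^a-\boldsymbol{\beta}^{\psi}\|_2^2$. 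The only cosmetic difference is that the paper works directly with the bound on $\mathbbm{E}[\|X\boldsymbol{\beta}^a-\mathbf{y}\|_2^2-\|X\boldsymbol{\beta}^{\psi}-\mathbf{y}\|_2^2]$ already recorded in the proof of Theorem \ref{CH3:THM:03}(b), whereas you rederive it from the objective-gap form; these are the same computation.
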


Next, let us consider the mirror map $\psi(\boldsymbol{\beta}) = \|\boldsymbol{\beta}\|_{1+\delta}^{1+\delta}$ for a small $\delta > 0$ in VRSMD, which leads to a sparse solution. 
Assume that $\|\boldsymbol{\beta}_t^s\|_{\infty} \leq K$ for a large enough $K$ throughout the updates, we then have $\psi(\boldsymbol{\beta}_t^s)$ is $\frac{(1+\delta)\delta}{K^{1-\delta}}$-strongly convex and $\|\nabla \psi(\boldsymbol{\beta}^{a})\|_2\leq \sqrt{p}(1+\delta)K^{\delta}$. 
By Theorem \ref{CH3:THM:03} we have the estimator sequence converges to the penalized solution 
\begin{align}
\boldsymbol{\beta}^{(\delta)}=\argmin_{\boldsymbol{\beta}}\left\{ \|\boldsymbol{\beta}\|_{1+\delta}^{1+\delta}:  X\boldsymbol{\beta} = P_{\mathrm{col}(X)} \mathbf{y}\right\}.
\end{align}
This choice of mirror map has $\lim_{\delta\to0}\|\boldsymbol{\beta}\|_{1+\delta}^{1+\delta} = \|\boldsymbol{\beta}\|_{1}$ and the solution $\boldsymbol{\beta}^{(0)} := \argmin_{\boldsymbol{\beta}}\{ \|\boldsymbol{\beta}\|_{1}:  X\boldsymbol{\beta} = P_{\mathrm{col}(X)} \mathbf{y}\}$ is sparse. Thus we expect that for small $\delta$, $\boldsymbol{\beta}^{(\delta)}$ is close to $\boldsymbol{\beta}^{(0)} $ and recovers a sparse true parameter. 

We now provide a rigorous argument for the sparse recovery. Assume that the data is generated by $\mathbf{y} = X \boldsymbol{\beta}^o $ for a sparse $\boldsymbol{\beta}^o$. In the following theorem we have $\boldsymbol{\beta}^{(\delta)}$ accurately recovers $\boldsymbol{\beta}^o$ when the design matrix $X$ satisfies some proper conditions. 

\begin{theorem}[Sparse Recovery]\label{CH3:COR:02}
Under the sparse setting defined above, denote $s = \|\boldsymbol{\beta}^o\|_0$. Assume that the design matrix $X$ satisfies $(s,\gamma)$-RE condition and is $s$-good with constant $\kappa < \frac{1}{2}$. For any $\xi>0$, if we choose
\begin{equation}
\delta\leq \frac{\log\left(1+ \frac{(1-2\kappa)\sqrt{n\gamma}}{\sqrt{s}\|\mathbf{y}\|_2}\xi\right)}{\log p - \log\left(1+ \frac{(1-2\kappa)\sqrt{n\gamma}}{\sqrt{s}\|\mathbf{y}\|_2}\xi\right)},
\end{equation} 
we have
\begin{equation}
       \|\boldsymbol{\beta}^{(\delta)} - \boldsymbol{\beta}^o\|_1 \leq  \xi .
\end{equation}
\end{theorem}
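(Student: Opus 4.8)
The plan is to reduce the claim to a deterministic bound on $\mathbf{h} := \boldsymbol{\beta}^{(\delta)} - \boldsymbol{\beta}^o$ and then to choose $\delta$ small enough that this bound is at most $\xi$. First I would note that since $\mathbf{y} = X\boldsymbol{\beta}^o \in \mathrm{col}(X)$, the constraint defining $\boldsymbol{\beta}^{(\delta)}$ reads $X\boldsymbol{\beta} = P_{\mathrm{col}(X)}\mathbf{y} = \mathbf{y} = X\boldsymbol{\beta}^o$; hence $\boldsymbol{\beta}^o$ is feasible and $\mathbf{h} \in \mathcal{N}(X)$. Optimality of $\boldsymbol{\beta}^{(\delta)}$ then gives $\|\boldsymbol{\beta}^{(\delta)}\|_{1+\delta}^{1+\delta} \le \|\boldsymbol{\beta}^o\|_{1+\delta}^{1+\delta}$, and taking $(1+\delta)$-th roots, $\|\boldsymbol{\beta}^{(\delta)}\|_{1+\delta} \le \|\boldsymbol{\beta}^o\|_{1+\delta}$. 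The whole difficulty is that $s$-goodness is an $\ell_1$ statement while optimality is stated in $\ell_{1+\delta}$, so I must convert between the two exponents.

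For the conversion I would use the two elementary inequalities $\|\mathbf{v}\|_{1+\delta} \le \|\mathbf{v}\|_1$ (monotonicity of $\ell_q$ norms in $q$) and $\|\mathbf{v}\|_1 \le p^{\delta/(1+\delta)}\|\mathbf{v}\|_{1+\delta}$ (Hölder with conjugate exponent $\tfrac{1+\delta}{\delta}$). Chaining these with the optimality bound yields $\|\boldsymbol{\beta}^{(\delta)}\|_1 \le p^{\delta/(1+\delta)}\|\boldsymbol{\beta}^o\|_1$, which is the single place where the factor $p^{\delta/(1+\delta)}$ enters. Writing $S := \mathrm{supp}(\boldsymbol{\beta}^o)$ with $|S| = s$, I would split $\|\boldsymbol{\beta}^{(\delta)}\|_1 = \|\boldsymbol{\beta}^o_S + \mathbf{h}_S\|_1 + \|\mathbf{h}_{S^c}\|_1$ (using $\boldsymbol{\beta}^o_{S^c} = \mathbf{0}$) and apply the reverse triangle inequality on $S$ to obtain the cone-type inequality $\|\mathbf{h}_{S^c}\|_1 \le \|\mathbf{h}_S\|_1 + (p^{\delta/(1+\delta)} - 1)\|\boldsymbol{\beta}^o\|_1$.

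Next I would invoke $s$-goodness: since $\mathbf{h} \in \mathcal{N}(X)$ and $|S| \le s$, we have $\|\mathbf{h}_S\|_1 \le \kappa\|\mathbf{h}\|_1 = \kappa(\|\mathbf{h}_S\|_1 + \|\mathbf{h}_{S^c}\|_1)$, which rearranges to $\|\mathbf{h}_S\|_1 \le \tfrac{\kappa}{1-\kappa}\|\mathbf{h}_{S^c}\|_1$. Substituting into the cone inequality and using $\kappa < \tfrac12$ to keep the coefficient $\tfrac{1-2\kappa}{1-\kappa}$ positive, I obtain $\|\mathbf{h}\|_1 \le \tfrac{1}{1-2\kappa}(p^{\delta/(1+\delta)} - 1)\|\boldsymbol{\beta}^o\|_1$. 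It then remains to control $\|\boldsymbol{\beta}^o\|_1$: by Cauchy–Schwarz on the $s$-sparse vector, $\|\boldsymbol{\beta}^o\|_1 \le \sqrt{s}\|\boldsymbol{\beta}^o\|_2$, and the $(s,\gamma)$-RE condition applied to $\boldsymbol{\beta}^o$ gives $\|\boldsymbol{\beta}^o\|_2^2 \le \tfrac{1}{n\gamma}\|X\boldsymbol{\beta}^o\|_2^2 = \tfrac{1}{n\gamma}\|\mathbf{y}\|_2^2$, so $\|\boldsymbol{\beta}^o\|_1 \le \sqrt{s}\|\mathbf{y}\|_2/\sqrt{n\gamma}$. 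Combining, $\|\boldsymbol{\beta}^{(\delta)} - \boldsymbol{\beta}^o\|_1 \le \tfrac{p^{\delta/(1+\delta)}-1}{1-2\kappa}\cdot\tfrac{\sqrt{s}\|\mathbf{y}\|_2}{\sqrt{n\gamma}}$.

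Finally I would close the argument with a short algebraic check. Setting $c := \tfrac{(1-2\kappa)\sqrt{n\gamma}}{\sqrt{s}\|\mathbf{y}\|_2}$, the hypothesis on $\delta$ is equivalent to $p^{\delta/(1+\delta)} - 1 \le c\xi$: writing $a = \log(1+c\xi)$, the stated bound $\delta \le \tfrac{a}{\log p - a}$ rearranges (cross-multiplying, assuming $\log p > a$) to $\tfrac{\delta}{1+\delta}\log p \le a$, which exponentiates to exactly $p^{\delta/(1+\delta)} \le 1 + c\xi$. Plugging $p^{\delta/(1+\delta)} - 1 \le c\xi$ into the displayed bound gives $\|\boldsymbol{\beta}^{(\delta)} - \boldsymbol{\beta}^o\|_1 \le \xi$. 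I expect the main obstacle to be the exponent conversion step: obtaining the clean factor $p^{\delta/(1+\delta)}$ rather than a looser, support-dependent constant is precisely what makes the final $\delta$ threshold match the statement, so the Hölder inequality must be applied across all $p$ coordinates of $\boldsymbol{\beta}^{(\delta)}$ and combined with the reverse triangle inequality, instead of attempting a direct cone argument in the $\ell_{1+\delta}^{1+\delta}$ objective.
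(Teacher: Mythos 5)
Your proposal is correct, and it reaches the paper's intermediate bound $\|\boldsymbol{\beta}^{(\delta)}-\boldsymbol{\beta}^o\|_1\leq \frac{(p^{\delta/(1+\delta)}-1)}{1-2\kappa}\|\boldsymbol{\beta}^o\|_1$ by a genuinely more self-contained route. The shared core is identical: the H\"older conversion producing the factor $p^{\delta/(1+\delta)}$ (the paper's Lemma \ref{ch3:lem:08}), the bound $\|\boldsymbol{\beta}^o\|_1\leq\sqrt{s/(n\gamma)}\|\mathbf{y}\|_2$ from the RE condition, and the final exponentiation of the $\delta$ threshold. Where you diverge is the middle step. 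The paper first proves exact recovery $\boldsymbol{\beta}^{(0)}=\boldsymbol{\beta}^o$ (Lemma \ref{ch3:lem:10}), verifies a compatibility-type inequality $\|\boldsymbol{\beta}_I\|_1\leq\frac{\sqrt{s}+\kappa\sqrt{p}}{s_m}\|X\boldsymbol{\beta}\|_2+\kappa\|\boldsymbol{\beta}\|_1$ (Lemma \ref{ch3:lem:09}), and then invokes the imperfect-$\ell_1$-minimization stability theorem of Ben-Tal and Nemirovski (Theorem \ref{ch3:thm:arkadi}) with $\Delta=\epsilon=v=0$ and $\nu=(p^{\delta/(1+\delta)}-1)\|\boldsymbol{\beta}^{(0)}\|_1$. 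You instead compare $\boldsymbol{\beta}^{(\delta)}$ directly to the feasible point $\boldsymbol{\beta}^o$, note $\mathbf{h}=\boldsymbol{\beta}^{(\delta)}-\boldsymbol{\beta}^o\in\mathcal{N}(X)$, and run the cone argument $\|\mathbf{h}\|_1\leq 2\kappa\|\mathbf{h}\|_1+(p^{\delta/(1+\delta)}-1)\|\boldsymbol{\beta}^o\|_1$ using $s$-goodness alone. This eliminates the external theorem and both auxiliary lemmas, and in particular never needs the singular value $s_m$ (which indeed drops out of the paper's final bound anyway, since $\alpha$ is multiplied by $2\Delta+\epsilon=0$). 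What the paper's heavier machinery buys is extensibility: the cited theorem handles noisy constraints ($\Delta>0$) and only approximately sparse truth ($v>0$) with no extra work, whereas your argument would need to be redone in those settings. As a minor point, both arguments implicitly require $\log p>\log(1+c\xi)$ for the stated $\delta$ threshold to be positive, which you correctly flag when cross-multiplying.
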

By Theorem \ref{CH3:COR:02}, the estimator $\boldsymbol{\beta}^{(\delta)}$ estimates the sparse truth with a small error $\xi$. In this way, VRSMD algorithm \ref{ch3:algo:SMD} achieves near sparse recovery via implicit regularization. 
\begin{figure}[t]
    \centering
    \includegraphics[height = 3in,width = 3.2in]{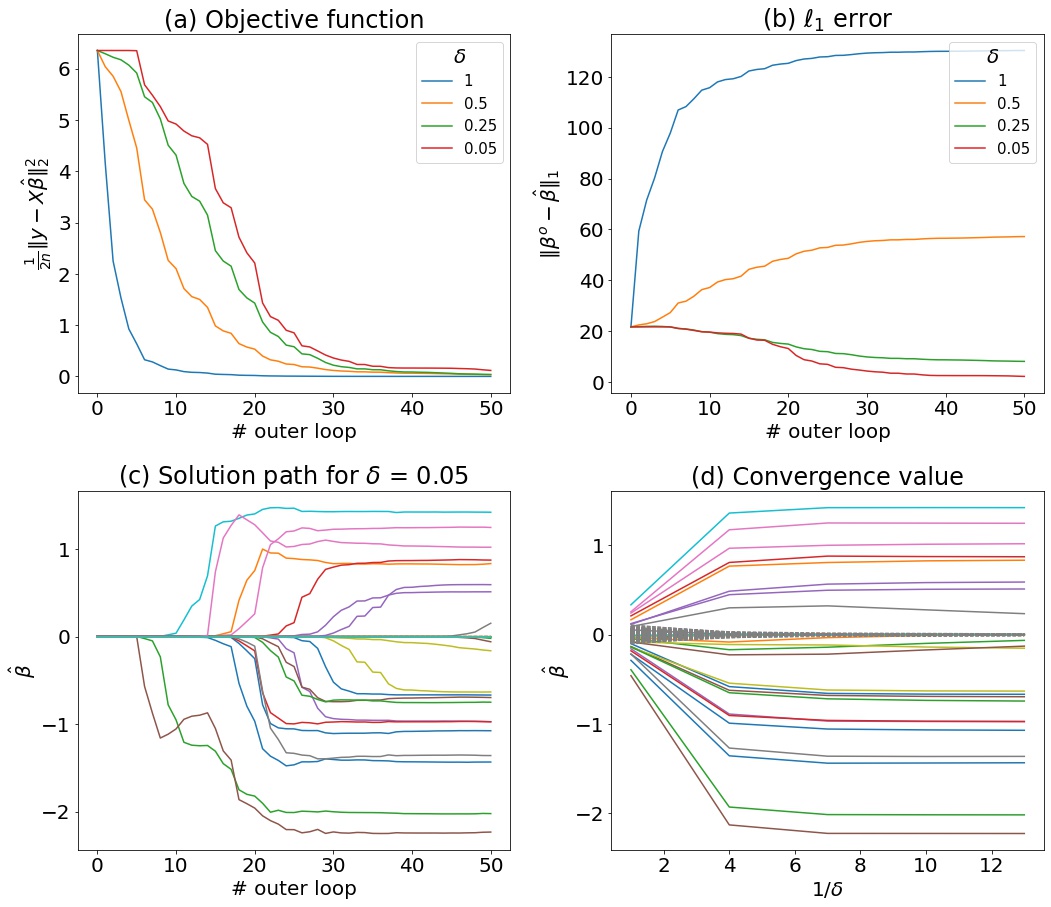}
    \caption{Run VRSMD Algorithm on simulated noiseless data. (a) the squared error objective function converges quickly to $0$. In (b), the $\ell_1$ estimation error converges to smaller value for smaller $\delta$, indicating the nearly exact recovery of the sparse signal, which supports Theorem \ref{CH3:COR:02}. In (c), for $\psi(\cdot) = \|\cdot\|_{1.05}^{1.05}$, the VRSMD estimator converges to the true parameter values. In (d), for a smaller $\delta$, the convergence value of VRSMD estimator is closer to ground truth.}
\label{ch3:fig:sparse_noiseless}
\end{figure}

\section{Numerical Experiment}\label{ch3:sec:05}
\textbf{Simulation.} We generate data by a sparse model 
as follows: 
Set $n = 1000,p = 5000 > n$ for the design matrix X. Simulate 
$X = \Sigma^{1/2} W$ where the entries of $W$ are i.i.d. $N(0,1)$ and $\Sigma = 0.5* \operatorname{diag}(\mathbf{1}_n) + 0.5* \mathbf{1}_{n\times n}$. The true parameter $\boldsymbol{\beta}^o \in \mathbbm{R}^p$ has its first $30$ entries sampled from i.i.d. $N(0,1)$ and the rest entries set to $0$. Compute responses $\mathbf{y} = X\boldsymbol{\beta}^o$. 

We then run VRSMD on objective function \eqref{ch3:eq:sec2eq2} for this simulated $X$ and $\mathbf{y}$. For a range of $\delta$, set the mirror map as $\psi(\cdot) = \|\cdot\|_{1+\delta}^{1+\delta}$, and run VRSMD with initialization $\Tilde{\boldsymbol{\beta}}^0 =\mathbf{0}$, step-size $\eta = 0.0002$, outer iteration number $50$ and inner iteration number $1000 = n$. The result is in Fig. \ref{ch3:fig:sparse_noiseless}.

\textbf{Experiment on RNA dataset.} 
We use the gene expression cancer RNA-Seq
data set\footnote{\url{https://archive.ics.uci.edu/ml/datasets/gene+expression+cancer+RNA-Seq}} for experiment. The data consists of $801$ observations, each of dimension $20,531$. Randomly split the data into $600$ training data and $201$ testing data. Run VRSMD algorithm on training data using mirror function $\psi = \|\cdot\|_{1.1}^{1.1}$ where initialization $\Tilde{\boldsymbol{\beta}}^0 =\mathbf{0}$, step-size $\eta = 0.015$, inner iteration number $400$ and outer iteration number determined by $5$-fold cross validation (i.e. early stopping). We also compare VRSMD with the Hadamard GD \cite{zhao2019implicit,NEURIPS2019_5cf21ce3}, which also has implicit regularization for sparsity. The result is in Fig. \ref{ch3:fig:real_data_compare_cv}.

\begin{figure}[ht]
    \centering
    \includegraphics[width = 3.2in,height = 2.5in]{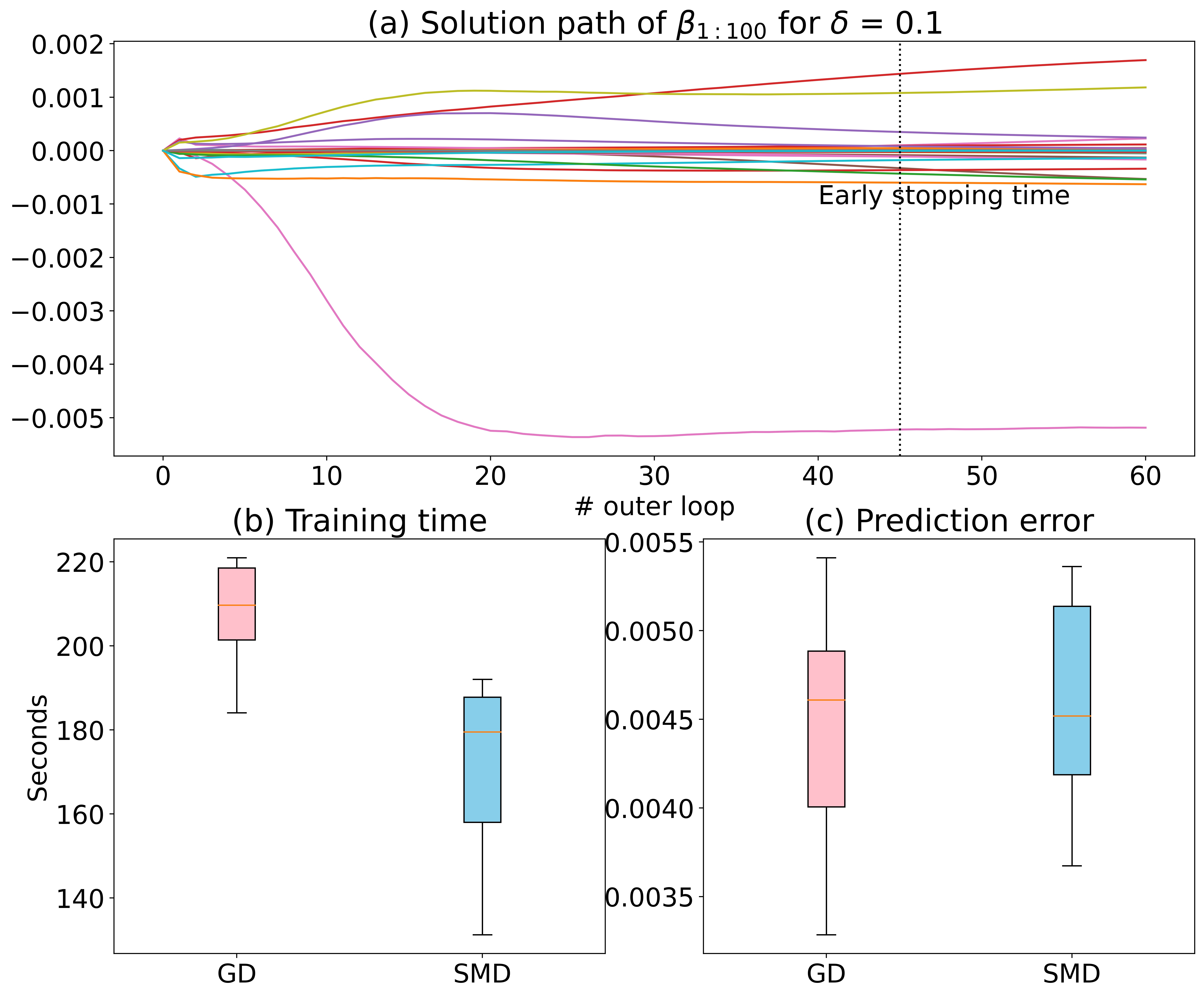}
    \caption{Run VRSMD Algorithm on RNA dataset. In (a), we plot the solution path of the first $100$ entries of $\boldsymbol{\beta}$, and it shows that the early stopped estimator is sparse. In (b) and (c), we compare the performance of VRSMD with Hadamard GD. Plot (b) shows that VRSMD trains faster than Hadamard GD, which is tested significant by one-sided Wilcoxon signed-rank test (p-value = $9.77\times 10^{-4}$). Plot (c) shows that the two algorithms have same prediction error on testing data, for which we test by two-sided Wilcoxon signed-rank test (p-value = $0.56$). 
    }
    \label{ch3:fig:real_data_compare_cv}
\end{figure}

\section{Conclusions and Future Research}\label{ch3:sec:06}
Our work analyzes the implicit regularization property of VRSMD, which covers both underfitting and overfitting cases in linear regression. In particular, our theorem shows that the implicit regularization property can help VRSMD find a sparse ground truth with a small error. Our experiments illustrate that the VRSMD is computationally efficient compared to the Hadamard GD algorithm that has implicit regularization for sparsity \cite{zhao2019implicit}, thanks to the stochastic nature of VRSMD. 

We discuss some future directions of our research. 
First, it is useful to study the implicit regularization properties of the VRSMD in the nonEuclidean setup. For example, one can consider the generalized linear model (GLM) where the data lies in a Riemannian manifold and mirror descent and/or natural gradient descent are efficient. Our analysis 
can be extended from the linear regression model to the GLM case. 

Second, it is interesting to investigate the minimax property of the VRSMD estimator. 
We see from our experiment that VRSMD with early stopping has comparable prediction performance to Hadamard GD that is minimax optimal for sparse regression \cite{zhao2019implicit, NEURIPS2019_5cf21ce3}. This leads to the open question of how to select a good mirror map and an optimal stopping time in VRSMD (or variants of VRSMD) such that the resulting estimator is minimax optimal and thus generalizes well.

Third, one can explore the implicit regularization properties of other variants of SMD. For example, one can consider the accelerated version of VRSMD  as the Katyusha algorithm in \cite{allenzhu2018katyusha}. Such an algorithm is well studies in optimization literature, and it has a better convergence rate that can match the theoretical optimum. We hypothesize that it also enjoys the implicit regularization property, but the proof is out of the scope of this work.   

Finally, our results might provide a better understanding of deep neural networks.  By \cite{NEURIPS2020_024d2d69}, Gradient Descent on Hadamard reparameterized linear regression, which is related to a neural network with multiple layers, can be approximated by Mirror Descent on original parameters. This point of view allows us to study a neural network from the VRSMD perspective and helps to explain why the Gradient Descent gives a sparse estimator in some deep learning models.

\printbibliography

\appendix
\section{Comparison with the VRSMD work \cite{li2021variance}}\label{ch3:app:background}
Besides the implicit regularization property that is emphasised in this work, the variance reduced stochastic mirror descent (VRSMD) itself is an algorithm of independent interest to the optimization community. We notice a concurrent work that also applies variance reduction to SMD \cite{li2021variance}. Our work is independently done without been affected by their work, and we have different focuses. The high level difference of their work and this one is that: their main goal is the convergence analysis, while this work focus on the interesting implicit regularization property and its implication of VRSMD. Nevertheless, we can compare the overlapping part of our work with theirs, which is the convergence analysis. Here we do it from three perspectives: assumptions, measurement of convergence, and the the number of stochastic gradient $\nabla f_i(x)$ (i.e. SFO calls) required to achieve a $\epsilon$ convergence:

\textbf{Difference in assumption:} There are 4 main difference in assumptions:

1. They assume their objective function to be $\min \frac{1}{n}\sum_i f_i(x) + g(x)$, where $g$ is possibly non-smooth but convex and $f_i$ is possibly non-convex but smooth. Our formulation does not have the non-smooth component, and we assume $f_i$ to be convex. In their work, they assume the proximal operator in $g$ can be efficiently solved to handle the non-smoothness. And they use different measurement for convergence compared to our work due to non-convexity. 

2. Their assumptions on the strong convexity of mirror map and the smoothness of the objective $f_i$s are with respect to $\ell_2$ norm. Our assumptions of the strong convexity and smoothness are for an arbitrary norm. Considering that the mirror descent is a nonEuclidean variant of gradient descent, assume a nonEuclidean norm would be more suitable.

3. They assume that the $f_i$ has bounded variance as $E_i[\|\nabla f_i(x) - \nabla f(x)\|_2^2]\leq \sigma^2$. This assumption is not required in our work.

4. They assume the PL inequality (w.r.t. the $\ell_2$ norm) holds to achieve a linear convergence rate. We prove the same rate, but under Quadratic Growth (QC) condition (w.r.t. an arbitrary norm). When considering the $\ell_2$ norm, the PL condition implies the QC condition \cite{karimi2016linear}, thus their assumption is stronger than our assumption here.

\textbf{Difference in measuring convergence:} Since the objective in \cite{li2021variance} is nonconvex, the convergence is evaluated for the stability gap $E[\|\nabla f(x)\|_2^2]$. Our works applies to a convex objective, so the convergence is evaluated for the optimization gap $E[f(x)] - \min_x f(x)$. 

\textbf{Comparing the SFO complexity:}
Unify all other notations and conditions in our work and theirs, we compare the SFO complexity required to achieve a $\epsilon$ stability gap, i.e. $E[\|\nabla f(x)\|_2^2]\leq \epsilon$:

For the general case, our SFO is $\mathcal{O}(\frac{1}{\epsilon}+ \frac{n}{\epsilon})$, and theirs is $\mathcal{O}(\min(\frac{n}{\epsilon},\frac{\sigma^2}{\epsilon^2}) + \frac{1}{\epsilon})$. Ignoring the bounded variance assumption by taking $\sigma = \infty$, then their rate is the same as ours; 

For the special case where we have linear convergence rate, our SFO is $\mathcal{O}(n\log\frac{1}{\epsilon}+\frac{1}{\mu}\log\frac{1}{\epsilon})$, and theirs is $\mathcal{O}(\min(n,\frac{\sigma^2}{\mu\epsilon})\frac{1}{\mu}\log\frac{1}{\epsilon} + \frac{1}{\mu}\log\frac{1}{\epsilon})$. Again taking $\sigma = \infty$, our SFO complexity is better than theirs, possibly because the difference in problem setting.

\section{Lemmas}\label{ch3:app:lem}
We first list some lemmas that will be used in the proof. For each lemma, we either give its reference or show the proof.

\begin{lemma}[\cite{Sidford2019}]\label{ch3:lem:lsmooth_quad_upper}
For a $L$-smooth function $f$ with respect to $\|\cdot\|$, we have:
\begin{equation}\label{ch3:eq:lsmooth_quad_upper}
     f(\mathbf{u}) - f(\mathbf{w}) - \nabla f(\mathbf{w})^T (\mathbf{u} - \mathbf{w})\leq \frac{L}{2} \|\mathbf{u} - \mathbf{w}\|^2.
\end{equation}
\end{lemma}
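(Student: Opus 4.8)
The plan is to reduce this multivariate inequality to a one-dimensional integral along the segment joining $\mathbf{w}$ and $\mathbf{u}$, and then to control the integrand using the $L$-smoothness hypothesis of Definition \ref{ch3:def:smooth} together with the duality pairing between $\|\cdot\|$ and its dual norm $\|\cdot\|_*$. This is the standard ``descent lemma,'' and the only point requiring care is that we work with an arbitrary norm rather than the Euclidean one, so the usual Cauchy--Schwarz step must be replaced by the generalized Cauchy--Schwarz (Hölder) inequality $\mathbf{a}^T \mathbf{b} \leq \|\mathbf{a}\|_* \|\mathbf{b}\|$. This pairing bound is immediate from the definition $\|\cdot\|_* = \max_{y:\|y\|=1}\langle y,\cdot\rangle$: for $\mathbf{b}\neq\mathbf{0}$, choosing $y = \mathbf{b}/\|\mathbf{b}\|$ gives $\langle \mathbf{b}/\|\mathbf{b}\|, \mathbf{a}\rangle \leq \|\mathbf{a}\|_*$.

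First I would introduce the scalar function $g(t) := f(\mathbf{w} + t(\mathbf{u} - \mathbf{w}))$ for $t \in [0,1]$, which satisfies $g(0) = f(\mathbf{w})$ and $g(1) = f(\mathbf{u})$. By the chain rule, $g'(t) = \nabla f(\mathbf{w} + t(\mathbf{u}-\mathbf{w}))^T (\mathbf{u} - \mathbf{w})$, so the fundamental theorem of calculus gives
$$f(\mathbf{u}) - f(\mathbf{w}) = \int_0^1 \nabla f(\mathbf{w} + t(\mathbf{u}-\mathbf{w}))^T (\mathbf{u} - \mathbf{w})\,dt.$$
Subtracting the first-order term, written as $\nabla f(\mathbf{w})^T(\mathbf{u}-\mathbf{w}) = \int_0^1 \nabla f(\mathbf{w})^T(\mathbf{u}-\mathbf{w})\,dt$, from both sides expresses the left-hand side of \eqref{ch3:eq:lsmooth_quad_upper} as the single integral $\int_0^1 [\nabla f(\mathbf{w}+t(\mathbf{u}-\mathbf{w})) - \nabla f(\mathbf{w})]^T (\mathbf{u}-\mathbf{w})\,dt$.

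Next I would bound the integrand pointwise. Applying the duality inequality and then the $L$-smoothness bound of Definition \ref{ch3:def:smooth} to the pair of points $\mathbf{w} + t(\mathbf{u}-\mathbf{w})$ and $\mathbf{w}$, and using $\|t(\mathbf{u}-\mathbf{w})\| = t\|\mathbf{u}-\mathbf{w}\|$, I obtain
$$[\nabla f(\mathbf{w}+t(\mathbf{u}-\mathbf{w})) - \nabla f(\mathbf{w})]^T (\mathbf{u}-\mathbf{w}) \leq \|\nabla f(\mathbf{w}+t(\mathbf{u}-\mathbf{w})) - \nabla f(\mathbf{w})\|_* \, \|\mathbf{u}-\mathbf{w}\| \leq L t \|\mathbf{u}-\mathbf{w}\|^2.$$
Integrating this estimate over $t \in [0,1]$ and using $\int_0^1 t\,dt = 1/2$ yields the bound $\frac{L}{2}\|\mathbf{u}-\mathbf{w}\|^2$, which is exactly \eqref{ch3:eq:lsmooth_quad_upper}. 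I do not anticipate any substantive obstacle: the argument is elementary calculus once the duality pairing is invoked correctly in place of Cauchy--Schwarz, and it is fully self-contained given Definition \ref{ch3:def:smooth}. The mild subtlety to flag explicitly is precisely the non-Euclidean norm, since an incautious use of $\langle\cdot,\cdot\rangle$ bounds would silently assume $\|\cdot\|_* = \|\cdot\|$.
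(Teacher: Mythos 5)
Your proof is correct and complete: the paper itself offers no proof of this lemma, deferring instead to the citation \cite{Sidford2019}, and your argument---the fundamental theorem of calculus along the segment from $\mathbf{w}$ to $\mathbf{u}$, the dual-norm pairing $\mathbf{a}^T\mathbf{b}\leq\|\mathbf{a}\|_*\|\mathbf{b}\|$ derived directly from the paper's definition of $\|\cdot\|_*$, the $L$-smoothness bound, and the integration $\int_0^1 t\,dt = 1/2$---is exactly the standard descent-lemma proof that the cited reference supplies. You were also right to flag the arbitrary-norm subtlety, since Definition \ref{ch3:def:smooth} is stated for a general norm and the na\"{i}ve Cauchy--Schwarz step would only be valid in the Euclidean case.
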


\begin{lemma}[Theorem 2.1.5 in \cite{nesterov1998introductory}]\label{ch3:lem:0}
For a $L$-smooth function $f$ with respect to $\|\cdot\|$, we have:
\begin{equation}\label{ch3:eq:lem0_0}
    \|\nabla f(\mathbf{u}) - \nabla f(\mathbf{w})\|^2_* \leq 2L [f(\mathbf{u}) - f(\mathbf{w}) - \langle\nabla f(\mathbf{w}), \mathbf{u}- \mathbf{w}\rangle].
\end{equation}
And especially, when $\nabla f(\mathbf{w}) = 0$:
\begin{equation}\label{ch3:eq:lem0}
    \|\nabla f(\mathbf{u})\|^2_* \leq 2L [f(\mathbf{u}) - f(\mathbf{w})].
\end{equation}
\end{lemma}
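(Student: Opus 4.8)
The plan is to reduce the stated inequality to a bound on the gap between a function value and its global minimum, and then exploit the $L$-smooth quadratic upper bound of Lemma \ref{ch3:lem:lsmooth_quad_upper} together with the convexity of $f$ (implicit in the setting, since the paper assumes each $f_i$ convex and this is the hypothesis of the cited Theorem 2.1.5). Concretely, I would fix $\mathbf{w}$ and introduce the shifted function $\phi(\mathbf{z}) := f(\mathbf{z}) - \langle\nabla f(\mathbf{w}), \mathbf{z}\rangle$. Differentiating gives $\nabla\phi(\mathbf{z}) = \nabla f(\mathbf{z}) - \nabla f(\mathbf{w})$, so $\nabla\phi(\mathbf{w}) = \mathbf{0}$; since subtracting a linear term preserves convexity and leaves the smoothness modulus $L$ unchanged, $\phi$ is convex and $L$-smooth, and the first-order optimality condition shows that $\mathbf{w}$ is a global minimizer of $\phi$.

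Next I would upper bound the minimum value $\phi(\mathbf{w}) = \min_{\mathbf{z}}\phi(\mathbf{z})$ by evaluating $\phi$ at a judiciously chosen point near $\mathbf{u}$. In the Euclidean case one simply takes the gradient step $\mathbf{u} - \tfrac{1}{L}\nabla\phi(\mathbf{u})$, but here the norm is arbitrary, so the key adjustment is to pick a vector $\mathbf{v}$ with $\|\mathbf{v}\| = 1$ that attains the dual norm, i.e. $\langle\nabla\phi(\mathbf{u}), \mathbf{v}\rangle = \|\nabla\phi(\mathbf{u})\|_*$; such a $\mathbf{v}$ exists by the very definition of $\|\cdot\|_*$. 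Applying Lemma \ref{ch3:lem:lsmooth_quad_upper} along the ray $\mathbf{z} = \mathbf{u} - t\mathbf{v}$ yields $\phi(\mathbf{z}) \le \phi(\mathbf{u}) - t\|\nabla\phi(\mathbf{u})\|_* + \tfrac{L t^2}{2}$, and minimizing the right-hand side over $t$ at $t = \|\nabla\phi(\mathbf{u})\|_*/L$ gives $\phi(\mathbf{w}) \le \phi(\mathbf{u}) - \tfrac{1}{2L}\|\nabla\phi(\mathbf{u})\|_*^2$.

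Finally I would translate this back to $f$. Substituting the definition of $\phi$ turns the inequality $\phi(\mathbf{w}) \le \phi(\mathbf{u}) - \tfrac{1}{2L}\|\nabla\phi(\mathbf{u})\|_*^2$ into $f(\mathbf{w}) - \langle\nabla f(\mathbf{w}),\mathbf{w}\rangle \le f(\mathbf{u}) - \langle\nabla f(\mathbf{w}),\mathbf{u}\rangle - \tfrac{1}{2L}\|\nabla f(\mathbf{u}) - \nabla f(\mathbf{w})\|_*^2$, and collecting the two linear terms into $\langle\nabla f(\mathbf{w}), \mathbf{u} - \mathbf{w}\rangle$ produces exactly \eqref{ch3:eq:lem0_0} after multiplying through by $2L$. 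The special case \eqref{ch3:eq:lem0} is then immediate: setting $\nabla f(\mathbf{w}) = \mathbf{0}$ annihilates both inner-product terms and reduces the left side to $\|\nabla f(\mathbf{u})\|_*^2$. The one genuinely non-routine point, and the main obstacle, is the non-Euclidean step: one must resist writing the gradient step directly and instead argue through the dual-norm-attaining direction $\mathbf{v}$, because only that choice makes the quadratic upper bound produce the sharp constant $1/(2L)$ under an arbitrary norm.
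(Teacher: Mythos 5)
Your proof is correct, and it differs from the paper in an instructive way: the paper offers no argument at all for this lemma, proving it by citation to Theorem 2.1.5 of Nesterov's book, whereas you supply a complete self-contained derivation. Two things you do are genuinely valuable beyond mere bookkeeping. First, Nesterov's Theorem 2.1.5 is stated for the Euclidean norm, while the lemma as used here asserts the inequality for an arbitrary norm and its dual; your adaptation --- replacing the explicit gradient step $\mathbf{u} - \tfrac{1}{L}\nabla\phi(\mathbf{u})$ by a step $t\mathbf{v}$ along a unit vector $\mathbf{v}$ attaining $\langle\nabla\phi(\mathbf{u}),\mathbf{v}\rangle = \|\nabla\phi(\mathbf{u})\|_*$, then optimizing $t = \|\nabla\phi(\mathbf{u})\|_*/L$ in the quadratic bound of Lemma \ref{ch3:lem:lsmooth_quad_upper} --- is exactly the modification needed to get the sharp constant $\tfrac{1}{2L}$ in the dual norm, so your proof actually covers the general statement the paper invokes rather than only the cited Euclidean one. (Attainment of the maximizer $\mathbf{v}$ is unproblematic here since the paper defines $\|\cdot\|_*$ via a maximum over the compact unit sphere.) Second, you correctly flag that convexity of $f$, though absent from the lemma's statement, is indispensable: it is what makes $\mathbf{w}$ a \emph{global} minimizer of $\phi(\mathbf{z}) = f(\mathbf{z}) - \langle\nabla f(\mathbf{w}),\mathbf{z}\rangle$ from $\nabla\phi(\mathbf{w})=\mathbf{0}$, and without it the right-hand side of \eqref{ch3:eq:lem0_0} can be negative (e.g.\ $f$ concave quadratic) while the left-hand side cannot; the hypothesis is harmless in context since the paper only applies the lemma to convex $f_i$, but your proof makes the dependence explicit where the paper's statement hides it.
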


\begin{lemma}\label{ch3:lem:1}
For the problem $\min_{\boldsymbol{\beta}} F(\boldsymbol{\beta}) = \frac{1}{n}\sum_{i=1}^n f_i(\boldsymbol{\beta})$, suppose all $f_i$s are convex and $L-$smooth w.r.t. norm $\|\cdot\|$, denote $\boldsymbol{\beta}^* = \argmin_{\boldsymbol{\beta}} F(\boldsymbol{\beta})$, then for $i$ chosen uniformly random from $\{1,\ldots,n\}$, the term $\mathbf{v} =\nabla f_{i}(\boldsymbol{\beta}) - \nabla f_{i}(\Tilde{\boldsymbol{\beta}}) + \nabla F(\Tilde{\boldsymbol{\beta}})$ have:
\begin{equation}\label{ch3:eq:lem1}
    \mathbbm{E} \|\mathbf{v}\|_*^2 \leq 12 L [F(\boldsymbol{\beta}) - F(\boldsymbol{\beta}^*) + F(\Tilde{\boldsymbol{\beta}}) - F(\boldsymbol{\beta}^*)].
\end{equation}
\end{lemma}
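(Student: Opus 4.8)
The plan is to exploit the first-order optimality condition $\nabla F(\boldsymbol{\beta}^*) = \mathbf{0}$ together with the co-coercivity-type bound of Lemma \ref{ch3:lem:0}, after rewriting $\mathbf{v}$ so that every gradient enters as a \emph{difference} anchored at the minimizer $\boldsymbol{\beta}^*$. Concretely, I would insert the pivot $\nabla f_i(\boldsymbol{\beta}^*)$ and decompose
\begin{equation*}
\mathbf{v} = \underbrace{\left[\nabla f_i(\boldsymbol{\beta}) - \nabla f_i(\boldsymbol{\beta}^*)\right]}_{a_i} + \underbrace{\left[\nabla f_i(\boldsymbol{\beta}^*) - \nabla f_i(\Tilde{\boldsymbol{\beta}})\right]}_{b_i} + \underbrace{\nabla F(\Tilde{\boldsymbol{\beta}})}_{c},
\end{equation*}
and then apply the elementary inequality $\|a+b+c\|_*^2 \leq 3\|a\|_*^2 + 3\|b\|_*^2 + 3\|c\|_*^2$, valid for any norm by the triangle inequality followed by the power-mean bound.

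Next I would take the expectation over the uniformly sampled index $i$ and control each of the three pieces separately. For the first piece, applying \eqref{ch3:eq:lem0_0} to $f_i$ at the pair $(\boldsymbol{\beta}, \boldsymbol{\beta}^*)$ and averaging over $i$---so that $\mathbbm{E}_i[f_i] = F$ and the cross term $\mathbbm{E}_i[\nabla f_i(\boldsymbol{\beta}^*)] = \nabla F(\boldsymbol{\beta}^*) = \mathbf{0}$ drops out---gives $\mathbbm{E}_i\|a_i\|_*^2 \leq 2L[F(\boldsymbol{\beta}) - F(\boldsymbol{\beta}^*)]$. The same argument applied to the pair $(\Tilde{\boldsymbol{\beta}}, \boldsymbol{\beta}^*)$ yields $\mathbbm{E}_i\|b_i\|_*^2 \leq 2L[F(\Tilde{\boldsymbol{\beta}}) - F(\boldsymbol{\beta}^*)]$; here it is essential to anchor the expansion at $\boldsymbol{\beta}^*$ (rather than at $\Tilde{\boldsymbol{\beta}}$) so that the residual inner-product remainder vanishes after averaging. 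For the deterministic third piece, I note that $F$ is itself $L$-smooth as an average of $L$-smooth functions, so the special case \eqref{ch3:eq:lem0} with $\nabla F(\boldsymbol{\beta}^*) = \mathbf{0}$ gives $\|c\|_*^2 = \|\nabla F(\Tilde{\boldsymbol{\beta}})\|_*^2 \leq 2L[F(\Tilde{\boldsymbol{\beta}}) - F(\boldsymbol{\beta}^*)]$.

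Combining the three bounds produces $\mathbbm{E}\|\mathbf{v}\|_*^2 \leq 6L[F(\boldsymbol{\beta}) - F(\boldsymbol{\beta}^*)] + 12L[F(\Tilde{\boldsymbol{\beta}}) - F(\boldsymbol{\beta}^*)]$, and since $F(\boldsymbol{\beta}) - F(\boldsymbol{\beta}^*) \geq 0$ I would simply loosen the leading constant $6$ up to $12$ to arrive at the symmetric bound \eqref{ch3:eq:lem1}.

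The main subtlety---and the reason the constant is $12$ rather than the $4$ familiar from Euclidean SVRG analyses---is that the norm $\|\cdot\|$ is arbitrary, so the Hilbert-space variance identity $\mathbbm{E}\|\mathbf{g} - \mathbbm{E}\mathbf{g}\|^2 = \mathbbm{E}\|\mathbf{g}\|^2 - \|\mathbbm{E}\mathbf{g}\|^2$ is unavailable. I therefore cannot cancel $\|\mathbbm{E}\mathbf{g}\|^2$ for free and must instead pay the crude factor-$3$ triangle-inequality split across the three terms. Beyond this, the only points requiring care are checking that the inner-product remainders in Lemma \ref{ch3:lem:0} genuinely vanish under $\mathbbm{E}_i$ (which hinges on $\nabla F(\boldsymbol{\beta}^*) = \mathbf{0}$) and that each invocation of the lemma is anchored at the correct base point, both of which are routine.
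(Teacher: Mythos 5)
Your proposal is correct and follows essentially the same route as the paper's proof: the same three-term decomposition pivoted at $\nabla f_i(\boldsymbol{\beta}^*)$, the same factor-$3$ split $\|a+b+c\|_*^2\leq 3\|a\|_*^2+3\|b\|_*^2+3\|c\|_*^2$, the same invocation of \eqref{ch3:eq:lem0_0} anchored at $\boldsymbol{\beta}^*$ with the cross terms vanishing under $\mathbbm{E}_i$ because $\nabla F(\boldsymbol{\beta}^*)=\mathbf{0}$, and the same final loosening of the coefficient $6L$ to $12L$ on the $F(\boldsymbol{\beta})-F(\boldsymbol{\beta}^*)$ term. No gaps.
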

\begin{proof}
\begin{align*}
    \mathbbm{E} \|\mathbf{v}\|_*^2 & = \mathbbm{E} \|\nabla f_{i}(\boldsymbol{\beta}) - \nabla f_{i}(\Tilde{\boldsymbol{\beta}}) + \nabla F(\Tilde{\boldsymbol{\beta}})\|_*^2\\
    &\leq 3\mathbbm{E} \|\nabla f_{i}(\boldsymbol{\beta}) -\nabla f_{i}(\boldsymbol{\beta}^*)\|_*^2 + 3\mathbbm{E} \|\nabla f_{i}(\boldsymbol{\beta}^*) - \nabla f_{i}(\Tilde{\boldsymbol{\beta}})\|_*^2 + 3\|\nabla F(\Tilde{\boldsymbol{\beta}})\|_*^2\\
    &= 3\mathbbm{E} \|\nabla f_{i}(\boldsymbol{\beta}) -\nabla f_{i}(\boldsymbol{\beta}^*)\|_*^2 + 3\mathbbm{E} \|\nabla f_{i}(\boldsymbol{\beta}^*) - \nabla f_{i}(\Tilde{\boldsymbol{\beta}})\|_*^2 + 3\|\nabla F(\Tilde{\boldsymbol{\beta}}) - \nabla F(\boldsymbol{\beta}^*)\|_*^2\\
    &\stackrel{\eqref{ch3:eq:lem0_0}}{\leq} 6\mathbbm{E}\{L [f_i(\boldsymbol{\beta}) - f_i(\boldsymbol{\beta}^*) - \langle\nabla f_i(\boldsymbol{\beta}^*), \boldsymbol{\beta}- \boldsymbol{\beta}^*\rangle]\} + 6\mathbbm{E}\{L [f_i(\Tilde{\boldsymbol{\beta}}) - f_i(\boldsymbol{\beta}^*) - \langle\nabla f_i(\boldsymbol{\beta}^*), \Tilde{\boldsymbol{\beta}}- \boldsymbol{\beta}^*\rangle]\}\\
    &\quad + 6\{L [F(\Tilde{\boldsymbol{\beta}}) - F(\boldsymbol{\beta}^*) - \langle\nabla F(\boldsymbol{\beta}^*), \Tilde{\boldsymbol{\beta}}- \boldsymbol{\beta}^*\rangle]\}\\
    &=  6L [F({\boldsymbol{\beta}}) - F(\boldsymbol{\beta}^*)] + 12L [F(\Tilde{\boldsymbol{\beta}}) - F(\boldsymbol{\beta}^*)]\\
    &\leq 12L [F({\boldsymbol{\beta}}) - F(\boldsymbol{\beta}^*) + F(\Tilde{\boldsymbol{\beta}}) - F(\boldsymbol{\beta}^*)]
\end{align*}

\end{proof}

\begin{lemma}\label{ch3:lem:3}
For the linear regression objective function $F(\boldsymbol{\beta})= \|X\boldsymbol{\beta} - y\|_2^2/(2n) =\frac{1}{n}\sum_{i=1}^n f_i(\boldsymbol{\beta})$ where $f_i(\boldsymbol{\beta}) = \left(\mathbf{x}_i^T\boldsymbol{\beta} - y_i\right)^2/2$, the term $\mathbf{v} =\nabla f_{i}(\boldsymbol{\beta}) - \nabla f_{i}(\Tilde{\boldsymbol{\beta}}) + \nabla F(\Tilde{\boldsymbol{\beta}})$ is in $\mathrm{col}(X^T)$.
\end{lemma}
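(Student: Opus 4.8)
The plan is to reduce the claim to the elementary fact that $\mathrm{col}(X^T)$ is a linear subspace spanned by the data vectors $\mathbf{x}_1,\ldots,\mathbf{x}_n$, and that each of the three terms defining $\mathbf{v}$ individually lies in this subspace.

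First I would compute the per-sample gradient explicitly. Since $f_i(\boldsymbol{\beta}) = \tfrac{1}{2}(\mathbf{x}_i^T\boldsymbol{\beta} - y_i)^2$, the chain rule gives $\nabla f_i(\boldsymbol{\beta}) = (\mathbf{x}_i^T\boldsymbol{\beta} - y_i)\,\mathbf{x}_i$. The key observation is that this is simply a scalar multiple of $\mathbf{x}_i$. Because $X = [\mathbf{x}_1,\ldots,\mathbf{x}_n]^T$, we have $X^T = [\mathbf{x}_1,\ldots,\mathbf{x}_n]$, so each $\mathbf{x}_i$ is literally a column of $X^T$ and hence lies in $\mathrm{col}(X^T)$. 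Consequently both $\nabla f_{i}(\boldsymbol{\beta})$ and $\nabla f_{i}(\Tilde{\boldsymbol{\beta}})$ belong to $\mathrm{col}(X^T)$.

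Next I would handle the full-gradient term by writing it in matrix form: $\nabla F(\Tilde{\boldsymbol{\beta}}) = \tfrac{1}{n}X^T(X\Tilde{\boldsymbol{\beta}} - \mathbf{y})$, which is of the form $X^T\mathbf{w}$ with $\mathbf{w} = \tfrac{1}{n}(X\Tilde{\boldsymbol{\beta}}-\mathbf{y})$, so it too lies in $\mathrm{col}(X^T)$. Finally, since $\mathrm{col}(X^T)$ is a linear subspace and therefore closed under addition and scalar multiplication, the combination $\mathbf{v} = \nabla f_{i}(\boldsymbol{\beta}) - \nabla f_{i}(\Tilde{\boldsymbol{\beta}}) + \nabla F(\Tilde{\boldsymbol{\beta}})$ of three elements of this subspace is again an element of it.

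There is no genuine obstacle here; the content is entirely the recognition that the gradient of each squared-error term points in the direction of the corresponding data vector $\mathbf{x}_i$, together with the closure of the column space under linear combinations. The only point requiring mild care is the bookkeeping that the columns of $X^T$ are exactly the $\mathbf{x}_i$, which follows immediately from the definition $X = [\mathbf{x}_1,\ldots,\mathbf{x}_n]^T$.
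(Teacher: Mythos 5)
Your proposal is correct and follows essentially the same route as the paper: both compute $\nabla f_i(\boldsymbol{\beta}) = (\mathbf{x}_i^T\boldsymbol{\beta} - y_i)\mathbf{x}_i$, identify $\mathbf{x}_i$ as a column of $X^T$ (the paper writes this as $X^T$ times a standard basis vector), write $\nabla F(\Tilde{\boldsymbol{\beta}}) = \tfrac{1}{n}X^T(X\Tilde{\boldsymbol{\beta}}-\mathbf{y})$, and conclude by linearity of the subspace. No gaps.
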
\begin{proof}
Check 
\begin{align*}
    \nabla f_{i}(\boldsymbol{\beta}) = \mathbf{x}_{i}(\mathbf{x}_{i}^T \boldsymbol{\beta} - y_{i}) = X^T \begin{bmatrix}\mathbf{0}_{i-1}\\1\\\mathbf{0}_{n-i}\end{bmatrix}(\mathbf{x}_{i}^T \boldsymbol{\beta} - y_{i}) \in\mathrm{col}(X^T).
\end{align*}
And $\nabla F(\Tilde{\boldsymbol{\beta}}) = X^T\left[\frac{1}{n}(X\Tilde{\boldsymbol{\beta}} - \mathbf{y})\right] \in\mathrm{col}(X^T)$. Thus $\mathbf{v} =\nabla f_{i}(\boldsymbol{\beta}) - \nabla f_{i}(\Tilde{\boldsymbol{\beta}}) + \nabla F(\Tilde{\boldsymbol{\beta}}) \in \mathrm{col}(X^T)$.
\end{proof}

\begin{lemma}\label{ch3:lem:4}
For a matrix $X\in \mathbbm{R}^{n\times p}$, for any $\mathbf{u} \in \mathrm{col}(X^T)$, we have
\begin{equation}\label{ch3:eq:lem4}
    \|X\mathbf{u}\|_2^2 \geq s_m^2\|\mathbf{u}\|_2^2
\end{equation}
where $s_m^2$ is the smallest non-zero eigenvalue of $X^T X$.
\end{lemma}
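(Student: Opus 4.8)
The plan is to diagonalize the symmetric matrix $X^T X$ and to exploit the fact that $\mathrm{col}(X^T)$ coincides exactly with the eigenspace spanned by the eigenvectors belonging to the nonzero eigenvalues. Once that identification is in hand, the inequality becomes a one-line computation with the quadratic form.

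First I would invoke the spectral theorem: since $X^T X\in\mathbbm{R}^{p\times p}$ is symmetric and positive semidefinite, it admits an orthonormal eigenbasis $\mathbf{w}_1,\ldots,\mathbf{w}_p$ with eigenvalues $\lambda_1\geq\cdots\geq\lambda_r>0=\lambda_{r+1}=\cdots=\lambda_p$, where $r=\mathrm{rank}(X)$ and $s_m^2=\lambda_r$ is the smallest nonzero eigenvalue.

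Next I would pin down which eigenvectors span $\mathrm{col}(X^T)$. The crucial observation is $\mathcal{N}(X)=\mathcal{N}(X^T X)$: if $X^T X\mathbf{v}=\mathbf{0}$ then $\|X\mathbf{v}\|_2^2=\mathbf{v}^T X^T X\mathbf{v}=0$, forcing $X\mathbf{v}=\mathbf{0}$, while the reverse inclusion is immediate. Hence $\mathcal{N}(X)=\mathrm{span}(\mathbf{w}_{r+1},\ldots,\mathbf{w}_p)$, and combining this with the orthogonal decomposition $\mathbbm{R}^p=\mathcal{N}(X)\oplus\mathrm{col}(X^T)$ yields $\mathrm{col}(X^T)=\mathrm{span}(\mathbf{w}_1,\ldots,\mathbf{w}_r)$.

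Finally, for any $\mathbf{u}\in\mathrm{col}(X^T)$ I would expand $\mathbf{u}=\sum_{i=1}^r c_i\mathbf{w}_i$ and compute
\[
\|X\mathbf{u}\|_2^2=\mathbf{u}^T X^T X\mathbf{u}=\sum_{i=1}^r\lambda_i c_i^2\geq\lambda_r\sum_{i=1}^r c_i^2=s_m^2\|\mathbf{u}\|_2^2,
\]
using orthonormality of the $\mathbf{w}_i$ in the final equality. There is no genuine obstacle in this argument; the only step demanding care is the identification $\mathrm{col}(X^T)=\mathrm{span}(\mathbf{w}_1,\ldots,\mathbf{w}_r)$, which rests on $\mathcal{N}(X)=\mathcal{N}(X^T X)$ together with the fundamental orthogonality $\mathrm{col}(X^T)=\mathcal{N}(X)^{\perp}$. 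This lemma is exactly what lets one convert a bound on the residual $\|X\mathbf{u}\|_2$ into a bound on $\|\mathbf{u}\|_2$ in the parameter space, provided one has first guaranteed (via Lemma \ref{ch3:lem:3}) that the relevant increments stay in $\mathrm{col}(X^T)$.
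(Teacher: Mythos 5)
Your proof is correct and follows essentially the same route as the paper: the paper diagonalizes via the SVD of $X^T$ and expands $\mathbf{u}$ in the right singular vectors, while you diagonalize $X^TX$ and expand in its eigenvectors, which is the same computation since those eigenvectors are exactly the right singular vectors and $\lambda_i = s_i^2$. Your explicit justification that $\mathrm{col}(X^T)=\mathrm{span}(\mathbf{w}_1,\ldots,\mathbf{w}_r)$ via $\mathcal{N}(X)=\mathcal{N}(X^TX)$ is a small point the paper leaves implicit (it follows immediately from the SVD there), but otherwise the arguments coincide.
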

\begin{proof}
Consider the SVD of $X^T =  Q\Sigma V$ where $Q^T Q = Q Q^T = I_p$ and $V^T V = V V^T = I_n$
\begin{equation*}
    \Sigma = \begin{bmatrix}\operatorname{diag}(\mathbf{s})_m & \mathbf{0}_{m\times (n-m)}\\
    \mathbf{0}_{(p-m)\times m}&\mathbf{0}_{(p-m)\times(n-m)}
\end{bmatrix}_{p\times n}
\end{equation*}
with $\mathbf{s} = [s_1,\ldots,s_m]$, $s_1>\ldots>s_m>0$. And write $Q = [\mathbf{q}_1,\ldots,\mathbf{q}_p]$, then for any $\mathbf{u} \in \mathrm{col}(X^T)$, we can write $\mathbf{u} = \sum_{i=1}^m w_i\mathbf{q}_i$ for some $w_1,\ldots,w_m$, and $\|\mathbf{u}\|_2^2 = \sum_{i=1}^m w_i^2$.

Then
\begin{align*}
\begin{split}
    X\mathbf{u} &= V^T \Sigma^T Q^T (\sum_{i=1}^m w_i\mathbf{q}_i)\\
    &=V^T \Sigma^T \begin{bmatrix}w_1\\\vdots\\w_m\\0\\\vdots\\0\end{bmatrix}\\
    &=V^T \begin{bmatrix}s_1 w_1\\\vdots\\s_m w_m\\0\\\vdots\\0\end{bmatrix}.
\end{split}
\end{align*}
Thus 
\begin{align*}
    \begin{split}
        \|X\mathbf{u}\|_2^2 = \|V^T \begin{bmatrix}s_1 w_1\\\vdots\\s_m w_m\\0\\\vdots\\0\end{bmatrix}\|_2^2 = \sum_{i=1}^m s_i^2 w_i^2 \geq s_m^2 \sum_{i=1}^m w_i^2 = s_m^2 \|\mathbf{u}\|_2^2.
    \end{split}
\end{align*}
\end{proof}

\begin{lemma}\label{ch3:lem:5}
If $f$ is $\alpha-$strongly convex w.r.t. $\|\cdot\|$, then 
\begin{equation}\label{ch3:eq:lem5}
    f(\mathbf{u})\leq f(\mathbf{w}) + \nabla f(\mathbf{w}) ^T (\mathbf{u} - \mathbf{w}) + \frac{1}{2\alpha}\|\nabla f(\mathbf{u}) - \nabla f(\mathbf{w})\|_*^2.
\end{equation}
\end{lemma}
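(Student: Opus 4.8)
The plan is to reduce the inequality to a one-dimensional minimization by exploiting that a suitably gradient-shifted version of $f$ attains its minimum at $\mathbf{w}$. Concretely, fix $\mathbf{w}$ and introduce the auxiliary function $\phi(\mathbf{z}) := f(\mathbf{z}) - \nabla f(\mathbf{w})^T\mathbf{z}$. Since subtracting a linear term preserves $\alpha$-strong convexity (the quadratic term in Definition \ref{ch3:def:convex} is unaffected), $\phi$ is again $\alpha$-strongly convex, and its gradient $\nabla\phi(\mathbf{z}) = \nabla f(\mathbf{z}) - \nabla f(\mathbf{w})$ vanishes at $\mathbf{z} = \mathbf{w}$, so $\mathbf{w}$ is the global minimizer of $\phi$. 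The key observation is that the left-hand side of the target inequality \eqref{ch3:eq:lem5} is exactly $\phi(\mathbf{u}) - \phi(\mathbf{w})$, because $\phi(\mathbf{u}) - \phi(\mathbf{w}) = f(\mathbf{u}) - f(\mathbf{w}) - \nabla f(\mathbf{w})^T(\mathbf{u} - \mathbf{w})$. Thus it suffices to upper bound the optimality gap $\phi(\mathbf{u}) - \min_\mathbf{z}\phi(\mathbf{z})$.

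To bound this gap, I would write the strong convexity inequality for $\phi$ expanded around the point $\mathbf{u}$: for every $\mathbf{z}$,
\[
\phi(\mathbf{z}) \geq \phi(\mathbf{u}) + \nabla\phi(\mathbf{u})^T(\mathbf{z}-\mathbf{u}) + \frac{\alpha}{2}\|\mathbf{z}-\mathbf{u}\|^2,
\]
and minimize the right-hand side over $\mathbf{z}$ to obtain a lower bound on $\min_\mathbf{z}\phi(\mathbf{z}) = \phi(\mathbf{w})$. Writing $\mathbf{d} = \mathbf{z}-\mathbf{u}$ and $\mathbf{g} = \nabla\phi(\mathbf{u}) = \nabla f(\mathbf{u}) - \nabla f(\mathbf{w})$, this reduces to evaluating $\min_{\mathbf{d}}[\,\mathbf{g}^T\mathbf{d} + \frac{\alpha}{2}\|\mathbf{d}\|^2\,]$.

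The crux — and the step where the primal/dual norm bookkeeping must be handled carefully — is to evaluate this minimum in terms of the dual norm $\|\mathbf{g}\|_*$. Using the dual-norm inequality $\mathbf{g}^T\mathbf{d} \geq -\|\mathbf{g}\|_*\|\mathbf{d}\|$ termwise and then minimizing the scalar map $t \mapsto -\|\mathbf{g}\|_* t + \frac{\alpha}{2}t^2$ over $t = \|\mathbf{d}\| \geq 0$ (whose unconstrained vertex $t^\star = \|\mathbf{g}\|_*/\alpha$ is nonnegative, so the constraint is inactive), the minimum equals $-\frac{1}{2\alpha}\|\mathbf{g}\|_*^2$. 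This yields $\phi(\mathbf{w}) \geq \phi(\mathbf{u}) - \frac{1}{2\alpha}\|\nabla f(\mathbf{u}) - \nabla f(\mathbf{w})\|_*^2$, which rearranges precisely to the claimed bound \eqref{ch3:eq:lem5}. I expect the only delicate point to be this translation from the norm $\|\cdot\|$ appearing in the definition of strong convexity to the dual norm $\|\cdot\|_*$ in the conclusion, mirroring the way Lemma \ref{ch3:lem:0} converts smoothness in $\|\cdot\|$ into a dual-norm gradient bound; everything else is routine bookkeeping.
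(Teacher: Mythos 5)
Your proposal is correct and follows essentially the same route as the paper's own proof: both introduce the gradient-shifted function $f(\mathbf{z}) - \nabla f(\mathbf{w})^T\mathbf{z}$ (minimized at $\mathbf{w}$), apply its strong-convexity inequality centered at $\mathbf{u}$, lower-bound the linear term via the dual-norm inequality, and minimize the resulting scalar quadratic to obtain the $-\frac{1}{2\alpha}\|\nabla f(\mathbf{u})-\nabla f(\mathbf{w})\|_*^2$ bound. No gaps; the primal/dual norm bookkeeping you flag is handled exactly as in the paper.
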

\begin{proof}
Consider $g(\mathbf{v}) =f(\mathbf{v}) - \mathbf{v}^T \nabla f(\mathbf{w})$, then $g$ is also $\alpha-$strongly convex, and it's easy to check that $g$ has a unique minimizer $\mathbf{w}$. 

Next, by $\alpha-$strong convexity of $g$, for any $\mathbf{u}$ we have
\begin{equation}\label{ch3:eq:lem5:eq1}
    g(\mathbf{v}) \geq g(\mathbf{u}) + \nabla g(\mathbf{u})^T (\mathbf{v} - \mathbf{u}) + \frac{\alpha}{2} \|\mathbf{v}-\mathbf{u}\|^2.
\end{equation}
Minimizing both side of \eqref{ch3:eq:lem5:eq1} w.r.t. $\mathbf{v}$, we have
\begin{align*}
\begin{split}
\label{ch3:eq:lem5:eq2}
    g(\mathbf{w}) = \min_{\mathbf{v}}g(\mathbf{v}) &\geq \min_{\mathbf{v}} g(\mathbf{u}) + \nabla g(\mathbf{u})^T (\mathbf{v} - \mathbf{u}) + \frac{\alpha}{2} \|\mathbf{v}-\mathbf{u}\|^2\\
    &\geq \min_{\mathbf{v}} g(\mathbf{u}) - \|\nabla g(\mathbf{u})\|_* \|\mathbf{v} - \mathbf{u}\| + \frac{\alpha}{2} \|\mathbf{v}-\mathbf{u}\|^2\\
    &\geq g(\mathbf{u}) - \frac{1}{2\alpha} \|\nabla g(\mathbf{u})\|_*^2.
\end{split}
\end{align*}
That is, 
\begin{equation*}
    f(\mathbf{w}) - \mathbf{w}^T \nabla f(\mathbf{w}) \geq f(\mathbf{u}) - \mathbf{u}^T \nabla f(\mathbf{w}) - \frac{1}{2\alpha} \|\nabla f(\mathbf{u}) - \nabla f(\mathbf{w})\|_*^2.
\end{equation*}
Rearrange the terms we get \eqref{ch3:eq:lem5}.
\end{proof}

\begin{lemma}\label{ch3:lem:6}
If $f$ is $\alpha-$strongly convex w.r.t. $\|\cdot\|$, then 
\begin{equation}\label{ch3:eq:lem6}
   [\nabla f(\mathbf{u}) - \nabla f(\mathbf{w})]^T[\mathbf{u} - \mathbf{w}] \leq\frac{1}{\alpha}\|\nabla f(\mathbf{u}) - \nabla f(\mathbf{w})\|_*^2.
\end{equation}
\end{lemma}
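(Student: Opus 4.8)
The plan is to derive this co-coercivity-type inequality by applying the upper bound from Lemma~\ref{ch3:lem:5} symmetrically. The key observation is that Lemma~\ref{ch3:lem:5} holds for every pair of points, so I may instantiate it twice: once in the given order $(\mathbf{u},\mathbf{w})$ and once with the roles of $\mathbf{u}$ and $\mathbf{w}$ interchanged, and then add the two resulting inequalities.

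First I would write Lemma~\ref{ch3:lem:5} at $(\mathbf{u},\mathbf{w})$,
\[
f(\mathbf{u})\leq f(\mathbf{w}) + \nabla f(\mathbf{w})^T(\mathbf{u}-\mathbf{w}) + \frac{1}{2\alpha}\|\nabla f(\mathbf{u})-\nabla f(\mathbf{w})\|_*^2,
\]
and the same inequality at the swapped pair $(\mathbf{w},\mathbf{u})$,
\[
f(\mathbf{w})\leq f(\mathbf{u}) + \nabla f(\mathbf{u})^T(\mathbf{w}-\mathbf{u}) + \frac{1}{2\alpha}\|\nabla f(\mathbf{w})-\nabla f(\mathbf{u})\|_*^2.
\]
Because the dual norm is invariant under negation of its argument, the two quadratic terms coincide. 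Adding the inequalities and cancelling the common $f(\mathbf{u})+f(\mathbf{w})$ on both sides then leaves
\[
0\leq \nabla f(\mathbf{w})^T(\mathbf{u}-\mathbf{w}) + \nabla f(\mathbf{u})^T(\mathbf{w}-\mathbf{u}) + \frac{1}{\alpha}\|\nabla f(\mathbf{u})-\nabla f(\mathbf{w})\|_*^2.
\]

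Next I would regroup the two linear terms. A direct expansion gives the identity $\nabla f(\mathbf{w})^T(\mathbf{u}-\mathbf{w}) + \nabla f(\mathbf{u})^T(\mathbf{w}-\mathbf{u}) = -[\nabla f(\mathbf{u})-\nabla f(\mathbf{w})]^T(\mathbf{u}-\mathbf{w})$. Substituting this into the displayed inequality and rearranging immediately produces the claimed bound \eqref{ch3:eq:lem6}.

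There is no serious obstacle; the whole argument is a two-line symmetrization of Lemma~\ref{ch3:lem:5}. The only points needing a moment's care are verifying that the two dual-norm terms are genuinely identical (they are, since $\|\mathbf{z}\|_*=\|-\mathbf{z}\|_*$ for the dual of any norm) and that the function values cancel exactly, so that what remains is a purely first-order inequality. Everything else is elementary algebra.
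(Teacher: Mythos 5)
Your proposal is correct and is essentially identical to the paper's own proof: both apply Lemma~\ref{ch3:lem:5} twice with the roles of $\mathbf{u}$ and $\mathbf{w}$ swapped, sum the two inequalities so that the function values cancel, and rearrange the linear terms to obtain \eqref{ch3:eq:lem6}. Your extra remarks (the dual norm's invariance under negation and the regrouping identity) simply make explicit steps the paper leaves implicit.
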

\begin{proof}
Apply \eqref{ch3:eq:lem5} twice to get:
\begin{align}
    f(\mathbf{u})&\leq f(\mathbf{w}) + \nabla f(\mathbf{w}) ^T (\mathbf{u} - \mathbf{w}) + \frac{1}{2\alpha}\|\nabla f(\mathbf{u}) - \nabla f(\mathbf{w})\|_*^2\label{ch3:eq:lem6:eq1},\\
    f(\mathbf{w})&\leq f(\mathbf{u}) + \nabla f(\mathbf{u}) ^T (\mathbf{w} - \mathbf{u}) + \frac{1}{2\alpha}\|\nabla f(\mathbf{u}) - \nabla f(\mathbf{w})\|_*^2\label{ch3:eq:lem6:eq2}.
\end{align}
Sum \eqref{ch3:eq:lem6:eq1} and \eqref{ch3:eq:lem6:eq2} we have:
\begin{align*}
    &0\leq (\nabla f(\mathbf{w}) - f(\mathbf{u})) ^T (\mathbf{u} - \mathbf{w})+ \frac{1}{\alpha}\|\nabla f(\mathbf{u}) - \nabla f(\mathbf{w})\|_*^2\\
    \Longrightarrow &   [\nabla f(\mathbf{u}) - \nabla f(\mathbf{w})]^T[\mathbf{u} - \mathbf{w}] \leq\frac{1}{\alpha}\|\nabla f(\mathbf{u}) - \nabla f(\mathbf{w})\|_*^2.
\end{align*}
\end{proof}

\begin{lemma}[\cite{karimi2016linear}]\label{ch3:lem:PL_QG}
If $f$ satisfies $PL$ inequality with respect to $\ell_2$ norm $\|\cdot\|_2$, then $f$ satisfies the Quadratic Growth (QG) condition w.r.t. $\|\cdot\|_2$ for the same constant $\mu$.
\end{lemma}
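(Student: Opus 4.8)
The plan is to follow the gradient-flow argument of \cite{karimi2016linear}. Recall that $f$ satisfying the PL inequality with constant $\mu$ (w.r.t. $\|\cdot\|_2$) means $\tfrac{1}{2}\|\nabla f(\mathbf{u})\|_2^2 \geq \mu\,(f(\mathbf{u}) - f^*)$ for all $\mathbf{u}$, where $f^* = \min f$ and $\mathcal{X}$ is the set of minimizers. Fix an arbitrary $\mathbf{u}$; if $f(\mathbf{u}) = f^*$ then $\mathbf{u}\in\mathcal{X}$, so $P_{\mathcal{X}}\mathbf{u}=\mathbf{u}$ and \eqref{ch3:eq:QG} holds trivially. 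Hence I assume $f(\mathbf{u}) > f^*$ and consider the gradient flow $\dot{\mathbf{x}}(t) = -\nabla f(\mathbf{x}(t))$ with $\mathbf{x}(0) = \mathbf{u}$; since in our setting $\nabla f$ is Lipschitz, this trajectory exists and is unique.

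Second, I would track the decay of the gap along the flow. Differentiating gives $\tfrac{d}{dt}\big(f(\mathbf{x}(t)) - f^*\big) = -\|\nabla f(\mathbf{x}(t))\|_2^2 \leq -2\mu\,(f(\mathbf{x}(t)) - f^*)$ by PL, so the gap decays exponentially and $f(\mathbf{x}(t))\to f^*$. The crucial computation is to bound the arclength of the trajectory: rewriting PL as $\sqrt{f - f^*} \leq \tfrac{1}{\sqrt{2\mu}}\,\|\nabla f\|_2$, I obtain
\[
\frac{d}{dt}\sqrt{f(\mathbf{x}(t)) - f^*} = \frac{-\|\nabla f(\mathbf{x}(t))\|_2^2}{2\sqrt{f(\mathbf{x}(t)) - f^*}} \leq -\sqrt{\tfrac{\mu}{2}}\,\|\nabla f(\mathbf{x}(t))\|_2 = -\sqrt{\tfrac{\mu}{2}}\,\|\dot{\mathbf{x}}(t)\|_2 .
\]

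Third, I would integrate this bound from $0$ to $\infty$. The left side telescopes to $\sqrt{f(\mathbf{u}) - f^*}$ (using $f(\mathbf{x}(t))\to f^*$), which yields $\int_0^\infty \|\dot{\mathbf{x}}(t)\|_2\,dt \leq \sqrt{\tfrac{2}{\mu}}\,\sqrt{f(\mathbf{u}) - f^*} < \infty$. A finite arclength forces $\mathbf{x}(t)$ to be Cauchy, so $\mathbf{x}(t)\to\mathbf{x}_\infty$ for some limit, and by continuity $f(\mathbf{x}_\infty) = f^*$, i.e. $\mathbf{x}_\infty\in\mathcal{X}$. Then $\|\mathbf{u} - \mathbf{x}_\infty\|_2 \leq \int_0^\infty\|\dot{\mathbf{x}}(t)\|_2\,dt \leq \sqrt{\tfrac{2}{\mu}}\,\sqrt{f(\mathbf{u}) - f^*}$, and since $P_{\mathcal{X}}\mathbf{u}$ is the nearest minimizer we have $\|\mathbf{u} - P_{\mathcal{X}}\mathbf{u}\|_2 \leq \|\mathbf{u} - \mathbf{x}_\infty\|_2$. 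Squaring gives $\tfrac{\mu}{2}\|\mathbf{u} - P_{\mathcal{X}}\mathbf{u}\|_2^2 \leq f(\mathbf{u}) - f^*$, which is exactly \eqref{ch3:eq:QG} with the same constant $\mu$.

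I expect the main obstacle to be the rigor of the continuous-time argument rather than the inequalities themselves: specifically, justifying existence of the gradient flow and, above all, upgrading convergence of the function values to convergence of the iterate $\mathbf{x}(t)$ to a single limit $\mathbf{x}_\infty\in\mathcal{X}$. The arclength bound is precisely what closes this gap, since finiteness of $\int_0^\infty\|\dot{\mathbf{x}}\|_2\,dt$ supplies the Cauchy property. The delicate point is therefore applying PL correctly to produce that integrable bound, handling the division by $\sqrt{f - f^*}$ only where the gap is strictly positive and treating the degenerate case $f(\mathbf{u}) = f^*$ separately as above.
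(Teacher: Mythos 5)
The paper gives no proof of this lemma---it is imported directly from \cite{karimi2016linear}---and your argument correctly reconstructs essentially the proof given in that reference: the gradient flow $\dot{\mathbf{x}}=-\nabla f(\mathbf{x})$, the desingularized quantity $\sqrt{f-f^*}$ whose derivative along the flow is bounded by $-\sqrt{\mu/2}\,\|\dot{\mathbf{x}}\|_2$, the finite-arclength bound $\int_0^\infty\|\dot{\mathbf{x}}(t)\|_2\,dt\le\sqrt{2/\mu}\,\sqrt{f(\mathbf{u})-f^*}$ forcing convergence to a limit $\mathbf{x}_\infty\in\mathcal{X}$, and the distance comparison $\|\mathbf{u}-P_{\mathcal{X}}\mathbf{u}\|_2\le\|\mathbf{u}-\mathbf{x}_\infty\|_2$, which yields \eqref{ch3:eq:QG} with the same constant $\mu$. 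The one caveat worth noting is that existence (and non-finite-time absorption) of the gradient flow requires $\nabla f$ to be Lipschitz, an assumption absent from the lemma's statement as quoted here but in force both in \cite{karimi2016linear} and everywhere the paper invokes the lemma, so your proof is sound in context.
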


\section{Proof for Proposition \ref{CH3:THM:02}}\label{ch3:app:C}
\begin{proof}
Consider the Bregman divergence
\begin{align*}
        &\mathbbm{E}[D_{\psi}(\boldsymbol{\beta}^*,\boldsymbol{\beta}_{t+1}^{s})]\\
        =&\mathbbm{E}[\psi(\boldsymbol{\beta}^*) - \psi(\boldsymbol{\beta}_{t+1}^{s}) - \nabla \psi(\boldsymbol{\beta}_{t+1}^{s})^T (\boldsymbol{\beta}^*-\boldsymbol{\beta}_{t+1}^{s})]\\ 
        =&\mathbbm{E}[\psi(\boldsymbol{\beta}^*) - \psi(\boldsymbol{\beta}_{t+1}^{s}) - [\nabla \psi(\boldsymbol{\beta}_{t}^{s}) - \eta\mathbf{v}_t^{s}]^T (\boldsymbol{\beta}^* - \boldsymbol{\beta}_{t}^{s}+\boldsymbol{\beta}_{t}^{s} -\boldsymbol{\beta}_{t+1}^{s})]\\
        =&\mathbbm{E}[\psi(\boldsymbol{\beta}^*) - \psi(\boldsymbol{\beta}_{t}^{s}) + \psi(\boldsymbol{\beta}_{t}^{s}) - \psi(\boldsymbol{\beta}_{t+1}^{s}) - \nabla \psi(\boldsymbol{\beta}_{t}^{s})^T (\boldsymbol{\beta}^* - \boldsymbol{\beta}_{t}^{s})\\
        &+ \nabla \psi(\boldsymbol{\beta}_{t}^{s})^T (\boldsymbol{\beta}_{t+1}^{s} - \boldsymbol{\beta}_{t}^{s}) + (\eta\mathbf{v}_t^{s})^T (\boldsymbol{\beta}^* - \boldsymbol{\beta}_{t+1}^{s})]\\
        =& \mathbbm{E}[D_{\psi}(\boldsymbol{\beta}^*,\boldsymbol{\beta}_{t}^{s}) - D_{\psi}(\boldsymbol{\beta}_{t+1}^{s},\boldsymbol{\beta}_{t}^{s})+ (\eta\mathbf{v}_t^{s})^T (\boldsymbol{\beta}^* - \boldsymbol{\beta}_{t+1}^{s})].\numberthis\label{ch3:eq:thm2:eq1}
\end{align*}

The last term in \eqref{ch3:eq:thm2:eq1}:
\begin{align*}
    &\mathbbm{E}[(\mathbf{v}_t^{s})^T (\boldsymbol{\beta}^* - \boldsymbol{\beta}_{t+1}^{s})]\\
    =&\mathbbm{E}[(\mathbf{v}_t^{s})^T (\boldsymbol{\beta}^* - \boldsymbol{\beta}_{t}^{s} + \boldsymbol{\beta}_{t}^{s}-\boldsymbol{\beta}_{t+1}^{s})]\\
    =&\mathbbm{E}[\mathbbm{E}[(\mathbf{v}_t^{s})^T (\boldsymbol{\beta}^* - \boldsymbol{\beta}_{t}^{s})|\boldsymbol{\beta}_{t}^{s}]] + \mathbbm{E}[(\mathbf{v}_t^{s})^T(\boldsymbol{\beta}_{t}^{s}-\boldsymbol{\beta}_{t+1}^{s})]\\
    =&\mathbbm{E}[(\nabla F(\boldsymbol{\beta}_{t}^{s})^T (\boldsymbol{\beta}^* - \boldsymbol{\beta}_{t}^{s})] + \frac{1}{\eta}\mathbbm{E}[(\nabla \psi(\boldsymbol{\beta}_{t}^{s}) - \nabla  \psi(\boldsymbol{\beta}_{t+1}^{s}))^T(\boldsymbol{\beta}_{t}^{s}-\boldsymbol{\beta}_{t+1}^{s})]\\
    \leq&\mathbbm{E}[(F(\boldsymbol{\beta}^*) - F(\boldsymbol{\beta}_{t}^{s})] + \frac{1}{\eta}\mathbbm{E}[(\nabla \psi(\boldsymbol{\beta}_{t}^{s}) - \nabla  \psi(\boldsymbol{\beta}_{t+1}^{s}))^T(\boldsymbol{\beta}_{t}^{s}-\boldsymbol{\beta}_{t+1}^{s})]\\
    \stackrel{\eqref{ch3:eq:lem6}}{\leq} &\mathbbm{E}[(F(\boldsymbol{\beta}^*) - F(\boldsymbol{\beta}_{t}^{s})] + \frac{1}{\eta\alpha}\mathbbm{E}[\|\nabla \psi(\boldsymbol{\beta}_{t}^{s}) - \nabla  \psi(\boldsymbol{\beta}_{t+1}^{s})\|_*^2]\\
    =&\mathbbm{E}[(F(\boldsymbol{\beta}^*) - F(\boldsymbol{\beta}_{t}^{s})] + \frac{\eta}{\alpha}\mathbbm{E}[\|\mathbf{v}_t^{s}\|_*^2].
\end{align*}

Thus we have 
\begin{align*}
        &\mathbbm{E}[D_{\psi}(\boldsymbol{\beta}^*,\boldsymbol{\beta}_{t+1}^{s})]\\
        \leq& \mathbbm{E}[D_{\psi}(\boldsymbol{\beta}^*,\boldsymbol{\beta}_{t}^{s})] - \mathbbm{E}[D_{\psi}(\boldsymbol{\beta}_{t+1}^{s},\boldsymbol{\beta}_{t}^{s})]+\eta\mathbbm{E}[F(\boldsymbol{\beta}^*) - F(\boldsymbol{\beta}_{t}^{s})] + \frac{\eta^2}{\alpha}\mathbbm{E}[\|\mathbf{v}_t^{s}\|_*^2]\\
        \stackrel{\eqref{ch3:eq:lem1}}{\leq}& \mathbbm{E}[D_{\psi}(\boldsymbol{\beta}^*,\boldsymbol{\beta}_{t}^{s})] - \mathbbm{E}[D_{\psi}(\boldsymbol{\beta}_{t+1}^{s},\boldsymbol{\beta}_{t}^{s})]\\
        &\quad+(\eta - \frac{12L\eta^2}{\alpha})\mathbbm{E}[F(\boldsymbol{\beta}^*) - F(\boldsymbol{\beta}_{t}^{s})] + \frac{12L\eta^2}{\alpha}\mathbbm{E}[F(\Tilde{\boldsymbol{\beta}}^{s-1}) - F(\boldsymbol{\beta}^*)]\\
        \leq&\mathbbm{E}[D_{\psi}(\boldsymbol{\beta}^*,\boldsymbol{\beta}_{t}^{s})] + \left(\frac{24L\eta^2}{\alpha} - \eta\right)\mathbbm{E}[F(\boldsymbol{\beta}_{t}^{s}) -F(\boldsymbol{\beta}^*)]\\
        &\quad+ \frac{12L\eta^2}{\alpha}\mathbbm{E}[F(\Tilde{\boldsymbol{\beta}}^{s-1}) - F(\boldsymbol{\beta}^*)] - \frac{12L\eta^2}{\alpha}\mathbbm{E}[F(\boldsymbol{\beta}_{t}^{s}) -F(\boldsymbol{\beta}^*)].\numberthis\label{ch3:eq:thm2:eq3}
\end{align*}

Sum \eqref{ch3:eq:thm2:eq3} for $t = 1,\ldots,m$, we  have
\begin{align*}
\mathbbm{E}[D_{\psi}(\boldsymbol{\beta}^*,\boldsymbol{\beta}_{m+1}^{s})] \leq & \mathbbm{E}[D_{\psi}(\boldsymbol{\beta}^*,\boldsymbol{\beta}_{1}^{s})]+\sum_{t=1}^{m}\left(\frac{24L\eta^2}{\alpha} - \eta\right)\mathbbm{E}[F(\boldsymbol{\beta}_{t}^{s}) -F(\boldsymbol{\beta}^*)]\\
&\quad+ \frac{12L\eta^2 m}{\alpha}\mathbbm{E}[F(\Tilde{\boldsymbol{\beta}}^{s-1}) - F(\boldsymbol{\beta}^*)] - \frac{12L\eta^2}{\alpha}\sum_{t=1}^{m}\mathbbm{E}[F(\boldsymbol{\beta}_{t}^{s}) -F(\boldsymbol{\beta}^*)].\numberthis\label{ch3:eq:thm2:eq4}
\end{align*}
For option I,  $\boldsymbol{\beta}_{1}^{s}= \boldsymbol{\beta}_{m+1}^{s-1}$, $\Tilde{\boldsymbol{\beta}}^{s}$ is a uniform random sample from $\{\boldsymbol{\beta}_{1}^{s},\ldots,\boldsymbol{\beta}_{m}^{s}\}$,  \eqref{ch3:eq:thm2:eq4} becomes: 
\begin{align*}
            \mathbbm{E}[D_{\psi}(\boldsymbol{\beta}^*,\boldsymbol{\beta}_{m+1}^{s})]\leq&\mathbbm{E}[D_{\psi}(\boldsymbol{\beta}^*,\boldsymbol{\beta}_{m+1}^{s-1})]+\sum_{t=1}^{m}\left(\frac{24L\eta^2}{\alpha} - \eta\right)\mathbbm{E}[F(\boldsymbol{\beta}_{t}^{s}) -F(\boldsymbol{\beta}^*)]\\
        &\quad+ \frac{12L\eta^2 m}{\alpha}\mathbbm{E}[F(\Tilde{\boldsymbol{\beta}}^{s-1}) - F(\boldsymbol{\beta}^*)] - \frac{12L\eta^2 m}{\alpha}\mathbbm{E}[F(\Tilde{\boldsymbol{\beta}}^{s}) -F(\boldsymbol{\beta}^*)].
\end{align*}
That is,
\begin{align*}
        &\left(\eta-\frac{24L\eta^2}{\alpha}\right)\sum_{t=1}^{m}\mathbbm{E}[F(\boldsymbol{\beta}_{t}^{s}) -F(\boldsymbol{\beta}^*)]\\
        \leq& \mathbbm{E}[D_{\psi}(\boldsymbol{\beta}^*,\boldsymbol{\beta}_{m+1}^{s-1})]+ \frac{12L\eta^2 m}{\alpha}\mathbbm{E}[F(\Tilde{\boldsymbol{\beta}}^{s-1}) - F(\boldsymbol{\beta}^*)]\\
        &\quad-\mathbbm{E}[D_{\psi}(\boldsymbol{\beta}^*,\boldsymbol{\beta}_{m+1}^{s})] - \frac{12L\eta^2 m}{\alpha}\mathbbm{E}[F(\Tilde{\boldsymbol{\beta}}^{s}) -F(\boldsymbol{\beta}^*)].\numberthis \label{ch3:eq:thm2:eq5}
\end{align*}
Thus we can define Lyapunov function
\begin{align*}
    P^s:= \mathbbm{E}[D_{\psi}(\boldsymbol{\beta}^*,\boldsymbol{\beta}_{m+1}^{s})]+ \frac{12L\eta^2 m}{\alpha}\mathbbm{E}[F(\Tilde{\boldsymbol{\beta}}^s) - F(\boldsymbol{\beta}^*)]\geq 0.
\end{align*}
Then 
\begin{align*}
        \mathbbm{E}[F(\boldsymbol{\beta}_a) - F(\boldsymbol{\beta}^*)] &= \frac{1}{T}\sum_{s = 1}^{S}\sum_{t=1}^{m} \mathbbm{E}[F(\boldsymbol{\beta}_t^{s}) - F(\boldsymbol{\beta}^*)]\\
        &\stackrel{\eqref{ch3:eq:thm2:eq5}}{\leq} \frac{1}{T}\sum_{s = 1}^{S}\frac{1}{\eta-\frac{24L\eta^2}{\alpha}} [P^{s-1} - P^{s}]\\
        &\leq \frac{\alpha}{(\alpha\eta-24L\eta^2)T} P^0
\end{align*}
For Option II, $\boldsymbol{\beta}_{1}^{s}= \Tilde{\boldsymbol{\beta}}^{s-1}$,  \eqref{ch3:eq:thm2:eq4} becomes:
\begin{align*}
\mathbbm{E}[D_{\psi}(\boldsymbol{\beta}^*,\boldsymbol{\beta}_{m+1}^{s})]\leq&\mathbbm{E}[D_{\psi}(\boldsymbol{\beta}^*,\Tilde{\boldsymbol{\beta}}^{s-1})]+\sum_{t=1}^{m}\left(\frac{24L\eta^2}{\alpha} - \eta\right)\mathbbm{E}[F(\boldsymbol{\beta}_{t}^{s}) -F(\boldsymbol{\beta}^*)]\\
&\quad+ \frac{12L\eta^2 m}{\alpha}\mathbbm{E}[F(\Tilde{\boldsymbol{\beta}}^{s-1}) - F(\boldsymbol{\beta}^*)] - \frac{12L\eta^2 m}{\alpha}\mathbbm{E}[F(\Tilde{\boldsymbol{\beta}}^{s}) -F(\boldsymbol{\beta}^*)]\\
=&\mathbbm{E}[D_{\psi}(\boldsymbol{\beta}^*,\Tilde{\boldsymbol{\beta}}^{s-1})]+m\left(\frac{12L\eta^2}{\alpha} - \eta\right)\mathbbm{E}[F(\Tilde{\boldsymbol{\beta}}^{s}) -F(\boldsymbol{\beta}^*)]\\
&\quad+ \frac{12L\eta^2 m}{\alpha}\mathbbm{E}[F(\Tilde{\boldsymbol{\beta}}^{s-1}) - F(\boldsymbol{\beta}^*)].\numberthis \label{ch3:eq:thm2:eq6}
\end{align*}
Since $\psi$ is $\ell-$smooth and $F$ satisfies QG condition, $\boldsymbol{\beta}^*$ is any minimum point of $F$, we can take $\boldsymbol{\beta}^*$ as required by Lemma \ref{ch3:lem:PL_QG} to get
\begin{align*}
        D_{\psi}(\boldsymbol{\beta}^*,\Tilde{\boldsymbol{\beta}}^{s-1}) &= \psi(\boldsymbol{\beta}^*) - \psi(\Tilde{\boldsymbol{\beta}}^{s-1}) - \nabla \psi(\Tilde{\boldsymbol{\beta}}^{s-1})^T (\boldsymbol{\beta}^*-\Tilde{\boldsymbol{\beta}}^{s-1})\\
        &\stackrel{\eqref{ch3:eq:lsmooth_quad_upper}}{\leq}\frac{\ell}{2}\left\|\boldsymbol{\beta}^* - \Tilde{\boldsymbol{\beta}}^{s-1}\right\|^2 \\
        &\stackrel{QG\text{ of } f}{\leq } \frac{\ell}{\mu}[F(\Tilde{\boldsymbol{\beta}}^{s-1}) - F(\boldsymbol{\beta}^*)].\numberthis\label{ch3:eq:thm2:eq7}
\end{align*}
Plug \eqref{ch3:eq:thm2:eq7} into \eqref{ch3:eq:thm2:eq6} we have
\begin{align*}
        0 \leq \mathbbm{E}[D_{\psi}(\boldsymbol{\beta}^*,\boldsymbol{\beta}_{m+1}^{s})]\leq& \left(\frac{12L\eta^2m}{\alpha} - m\eta \right)\mathbbm{E}[F(\Tilde{\boldsymbol{\beta}}^{s}) -F(\boldsymbol{\beta}^*)]\\
        &\quad+ \left(\frac{12L\eta^2 m}{\alpha} + \frac{\ell}{\mu}\right)\mathbbm{E}[F(\Tilde{\boldsymbol{\beta}}^{s-1}) - F(\boldsymbol{\beta}^*)].
\end{align*}
That is, 
\begin{align*}
        \mathbbm{E}[F(\Tilde{\boldsymbol{\beta}}^{s}) -F(\boldsymbol{\beta}^*)]&\leq \frac{ \frac{12L\eta^2 }{\alpha} + \frac{\ell}{m\mu}}{\eta - \frac{12L\eta^2}{\alpha}}\mathbbm{E}[F(\Tilde{\boldsymbol{\beta}}^{s-1}) - F(\boldsymbol{\beta}^*)]\\
        &= \tau \mathbbm{E}[F(\Tilde{\boldsymbol{\beta}}^{s-1}) - F(\boldsymbol{\beta}^*)].
\end{align*}
Thus,
\begin{align*}
    \mathbbm{E}[F(\Tilde{\boldsymbol{\beta}}^{S}) -F(\boldsymbol{\beta}^*)]&\leq \tau^S   \mathbbm{E}[F(\Tilde{\boldsymbol{\beta}}^{0}) -F(\boldsymbol{\beta}^*)]
\end{align*}
and by $\boldsymbol{\beta}^a = \Tilde{\boldsymbol{\beta}}^{S}$ we get the desired bound.

\end{proof}

\section{Proof for Theorem \ref{CH3:THM:03}}\label{ch3:app:E}
\begin{proof}
We first check that every $f_i$ is $L-$smooth w.r.t. $\|\cdot\|_2$.
\begin{align*}
    \|\nabla f_i(\mathbf{u}) - \nabla f_i(\mathbf{w})\|_2 &= \|(\mathbf{x}_i^T\mathbf{u} - y_i)\mathbf{x}_i - (\mathbf{x}_i^T\mathbf{w} - y_i)\mathbf{x}_i\|_2\\
    &=\|\mathbf{x}_i^T(\mathbf{u} - \mathbf{w})\mathbf{x}_i\|_2\\
    &=|\mathbf{x}_i^T(\mathbf{u} - \mathbf{w})| * \|\mathbf{x}_i\|_2\\
    &\leq \|\mathbf{x}_i\|_2 \|\mathbf{u} - \mathbf{w}\|_2\|\mathbf{x}_i\|_2\\
    &\leq (\max_i \|x_i\|_2^2) \|\mathbf{u} - \mathbf{w}\|_2.
\end{align*}
First, we prove part (a) of Theorem \ref{CH3:THM:03}. When $\psi(\boldsymbol{\beta})$ is $\alpha-$strongly convex w.r.t. $\|\cdot\|_2$, we can apply part (a) of Theorem 1, take $\boldsymbol{\beta}^* = \boldsymbol{\beta}^{\psi}$ to get
\begin{align*}
        &\mathbbm{E}\left[\frac{1}{2n}\|X\boldsymbol{\beta}^{a} - \mathbf{y}\|_2^2 - \frac{1}{2n}\|X\boldsymbol{\beta}^{\psi} - \mathbf{y}\|_2^2\right] \\
        \leq& \frac{\alpha}{(\alpha\eta-24L\eta^2)T} \left[D_{\psi}(\boldsymbol{\beta}^{\psi},\Tilde{\boldsymbol{\beta}}^{0})+ \frac{12L\eta^2 m}{\alpha}\left(\frac{1}{2n}\|X\Tilde{\boldsymbol{\beta}}^{0} - \mathbf{y}\|_2^2 -\frac{1}{2n}\|X\boldsymbol{\beta}^{\psi} - \mathbf{y}\|_2^2\right)\right]\numberthis\label{ch3:eq:10}.
\end{align*}
For the L.H.S. of \eqref{ch3:eq:10} we have
\begin{align*}
        &\|X\boldsymbol{\beta}^{a} - \mathbf{y}\|_2^2 - \|X\boldsymbol{\beta}^{\psi} - \mathbf{y}\|_2^2\\ =& \|X\boldsymbol{\beta}^{a} - X\boldsymbol{\beta}^{\psi} + X\boldsymbol{\beta}^{\psi}- \mathbf{y}\|_2^2 - \|X\boldsymbol{\beta}^{\psi} - \mathbf{y}\|_2^2\\
        =&\|X\boldsymbol{\beta}^{a} - X\boldsymbol{\beta}^{\psi}\|_2^2 + \|X\boldsymbol{\beta}^{\psi}- \mathbf{y}\|_2^2 + 2\langle X\boldsymbol{\beta}^{a} - X\boldsymbol{\beta}^{\psi} ,P_{\mathrm{col}(X)}\mathbf{y}- \mathbf{y}\rangle- \|X\boldsymbol{\beta}^{\psi} - \mathbf{y}\|_2^2\\
        =&\|X\boldsymbol{\beta}^{a} - X\boldsymbol{\beta}^{\psi}\|_2^2  + 2\langle X(\boldsymbol{\beta}^{a} - \boldsymbol{\beta}^{\psi}),-P_{\mathcal{N}(X^T)}\mathbf{y}\rangle\\
        =& \|X\boldsymbol{\beta}^{a} - X\boldsymbol{\beta}^{\psi}\|_2^2=\|X\boldsymbol{\beta}^{a} - P_{\mathrm{col}(X)} \mathbf{y}\|_2^2.
\end{align*}

Thus \eqref{ch3:eq:10} becomes
\begin{align*}\label{ch3:eq:15}
       &\mathbbm{E}[\|X\boldsymbol{\beta}^{a} - X\boldsymbol{\beta}^{\psi}\|_2^2] = \mathbbm{E}[\|X\boldsymbol{\beta}^{a} - P_{\mathrm{col}(X)} \mathbf{y}\|_2^2] \\
       \leq& \frac{\alpha}{(\alpha\eta-24L\eta^2)T} \left[2n D_{\psi}(\boldsymbol{\beta}^{\psi},\Tilde{\boldsymbol{\beta}}^{0})+ \frac{12L\eta^2 m}{\alpha}\left(\|X\Tilde{\boldsymbol{\beta}}^{0} - \mathbf{y}\|_2^2 -\|X\boldsymbol{\beta}^{\psi} - \mathbf{y}\|_2^2\right)\right].\numberthis
\end{align*}
Since $\nabla \psi(\Tilde{\boldsymbol{\beta}}^{0}) =\nabla \psi( \boldsymbol{\beta}_{m}^{0})  \in \mathrm{col}(X^T)$, by Lemma \ref{ch3:lem:3} we will have $\nabla \psi(\boldsymbol{\beta}_t^{s}) \in \mathrm{col}(X^T)$ for all $s$ and $t$, thus
\begin{equation}\label{ch3:eq:16}
    \nabla \psi(\boldsymbol{\beta}^a) \in \mathrm{col}(X^T).
\end{equation}
Then
\begin{align*}
       &\mathbbm{E} [\psi (\boldsymbol{\beta}^a) - \psi(\boldsymbol{\beta}^{\psi})]\\\leq& \mathbbm{E}\langle \nabla \psi(\boldsymbol{\beta}^a), \boldsymbol{\beta}^a -\boldsymbol{\beta}^{\psi}\rangle\\
       \stackrel{\eqref{ch3:eq:16}}{=}&\mathbbm{E}\langle \nabla \psi(\boldsymbol{\beta}^a), P_{\mathrm{col}(X^T)}(\boldsymbol{\beta}^a -\boldsymbol{\beta}^{\psi})\rangle\\
       \leq& \mathbbm{E}[\|\nabla \psi(\boldsymbol{\beta}^a)\|_2 \|P_{\mathrm{col}(X^T)}(\boldsymbol{\beta}^a -\boldsymbol{\beta}^{\psi})\|_2]\\
       \stackrel{}{\leq}& B * \mathbbm{E}\|P_{\mathrm{col}(X^T)}(\boldsymbol{\beta}^a -\boldsymbol{\beta}^{\psi})\|_2\\
       \stackrel{\eqref{ch3:eq:lem4}}{\leq}&\frac{B}{s_m}  \mathbbm{E}\|X P_{\mathrm{col}(X^T)}(\boldsymbol{\beta}^a -\boldsymbol{\beta}^{\psi})\|_2\\
       =&\frac{B}{s_m}  \mathbbm{E}\|X\boldsymbol{\beta}^a -X\boldsymbol{\beta}^{\psi}\|_2\\
       \leq& \frac{B}{s_m}  (\mathbbm{E}\|X\boldsymbol{\beta}^a -X\boldsymbol{\beta}^{\psi}\|_2^2)^{1/2}\\
       \stackrel{\eqref{ch3:eq:15}}{\leq} &\frac{B}{s_m} \sqrt{\frac{\alpha}{(\alpha\eta-24L\eta^2)T} \left[2n D_{\psi}(\boldsymbol{\beta}^{\psi},\Tilde{\boldsymbol{\beta}}^{0})+ \frac{12L\eta^2 m}{\alpha}\left(\|X\Tilde{\boldsymbol{\beta}}^{0} - \mathbf{y}\|_2^2 -\|X\boldsymbol{\beta}^{\psi} - \mathbf{y}\|_2^2\right)\right]}.
\end{align*}
Thus the $\boldsymbol{\beta}^{a}$ from SMD will have
\begin{align}
    &\mathbbm{E} [\psi (\boldsymbol{\beta}^a) - \psi(\boldsymbol{\beta}^{\psi})]\nonumber\\
    &\quad\leq \frac{B}{s_m} \sqrt{\frac{\alpha}{(\alpha\eta-24L\eta^2)T} \left[2n D_{\psi}(\boldsymbol{\beta}^{\psi},\Tilde{\boldsymbol{\beta}}^{0})+ \frac{12L\eta^2 m}{\alpha}\left(\|X\Tilde{\boldsymbol{\beta}}^{0} - \mathbf{y}\|_2^2 -\|X\boldsymbol{\beta}^{\psi} - \mathbf{y}\|_2^2\right)\right]}\label{ch3:eq:19}\\
    &\mathbbm{E}[\|X\boldsymbol{\beta}^{a} - \mathbf{y}\|_2^2 - \|X\boldsymbol{\beta}^{\psi} - \mathbf{y}\|_2^2]\nonumber\\ &\quad\leq\frac{\alpha}{(\alpha\eta-24L\eta^2)T} \left[2n D_{\psi}(\boldsymbol{\beta}^{\psi},\Tilde{\boldsymbol{\beta}}^{0})+ \frac{12L\eta^2 m}{\alpha}\left(\|X\Tilde{\boldsymbol{\beta}}^{0} - \mathbf{y}\|_2^2 -\|X\boldsymbol{\beta}^{\psi} - \mathbf{y}\|_2^2\right)\right].\label{ch3:eq:19b}
\end{align}
Set the R.H.S. of \eqref{ch3:eq:19} and \eqref{ch3:eq:19b} to $\epsilon$ to get an $\epsilon-$solution, we can solve
\begin{align*}
    T &\geq \frac{B^2}{s_m^2\epsilon^2 }\frac{\alpha}{(\alpha\eta-24L\eta^2)} \left[2n D_{\psi}(\boldsymbol{\beta}^{\psi},\Tilde{\boldsymbol{\beta}}^{0})+ \frac{12L\eta^2 m}{\alpha}\left(\|X\Tilde{\boldsymbol{\beta}}^{0} - \mathbf{y}\|_2^2 -\|X\boldsymbol{\beta}^{\psi} - \mathbf{y}\|_2^2\right)\right]\\
    T &\geq \frac{1}{ \epsilon}\frac{\alpha}{(\alpha\eta-24L\eta^2)} \left[2n D_{\psi}(\boldsymbol{\beta}^{\psi},\Tilde{\boldsymbol{\beta}}^{0})+ \frac{12L\eta^2 m}{\alpha}\left(\|X\Tilde{\boldsymbol{\beta}}^{0} - \mathbf{y}\|_2^2 -\|X\boldsymbol{\beta}^{\psi} - \mathbf{y}\|_2^2\right)\right].
\end{align*}
That is, $T\sim \mathcal{O}(\frac{1}{\epsilon} + \frac{1}{\epsilon^2})$.

For part (b) of Theorem \ref{CH3:THM:03}, we first need to show that $F(\cdot)$ satisfies PL inequality w.r.t. $\|\cdot\|_2$. 
Remember $F(\boldsymbol{\beta}) = \frac{1}{2n}\|X\boldsymbol{\beta} - y\|_2^2$, then 
\begin{align*}
        \frac{1}{2}\|\nabla F(\boldsymbol{\beta})\|_2^2 &= \frac{1}{2}\|\frac{1}{n} X^T (X\boldsymbol{\beta}-y)\|_2^2\\
        &=\frac{1}{2n^2}\|X^T (X\boldsymbol{\beta}-P_{\mathrm{col}(X)}y)\|_2^2\\
        &\stackrel{\eqref{ch3:eq:lem4}}{\geq}\frac{s_m^2}{2n^2}\|X\boldsymbol{\beta}-P_{\mathrm{col}(X)}y\|_2^2\\
        &= \frac{s_m^2}{2n^2}\|X\boldsymbol{\beta}-y + P_{\mathcal{N}(X^T)}y\|_2^2\\
        &= \frac{s_m^2}{2n^2}[\|X\boldsymbol{\beta}-y\|_2^2 - \|P_{\mathcal{N}(X^T)}y\|_2^2]\\
        &= \frac{s_m^2}{n}[F(\boldsymbol{\beta}) - F(\boldsymbol{\beta}^{\psi})].
\end{align*}
Thus $F(\cdot)$ satisfies PL inequality with constant $\frac{s_m^2}{n}$. By Lemma \ref{ch3:lem:PL_QG}, $F(\cdot)$ satisfies QG with the same constant, so part (b) of Theorem 1 applies here, and we get
\begin{equation*}
        \mathbbm{E}[F(\boldsymbol{\beta}_a) - F(\boldsymbol{\beta}^{\psi})] \leq (\tau')^S [F(\Tilde{\boldsymbol{\beta}}^0) - F(\boldsymbol{\beta}^{\psi})].
\end{equation*}
where 
\begin{equation*}
\tau' = \frac{ \frac{12L\eta^2 }{\alpha} + \frac{\ell n}{m s_m^2}}{\eta - \frac{12L\eta^2}{\alpha}},
\end{equation*}
By the similar statement as the first part of the proof, we have
\begin{align*}
     &\mathbbm{E} [\psi (\boldsymbol{\beta}^a) - \psi(\boldsymbol{\beta}^{\psi})]\\
     \leq&\frac{B}{s_m}[2n\mathbbm{E}[F(\boldsymbol{\beta}^a) - F(\boldsymbol{\beta}^{\psi})]]^{.5}\\
     \leq & \frac{B \sqrt{2n} (\tau')^{S/2} }{s_m}[F(\Tilde{\boldsymbol{\beta}}^0) - F(\boldsymbol{\beta}^{\psi})]^{.5}.
\end{align*}
Thus 
\begin{align*}
    \mathbbm{E} [\psi (\boldsymbol{\beta}^a) - \psi(\boldsymbol{\beta}^{\psi})]&\leq \frac{B(\tau')^{S/2}}{s_m} \sqrt{\|X\Tilde{\boldsymbol{\beta}}^{0} - \mathbf{y}\|_2^2 -\|X\boldsymbol{\beta}^{\psi} - \mathbf{y}\|_2^2}\\
    \mathbbm{E}[\|X\boldsymbol{\beta}^{a} - \mathbf{y}\|_2^2 - \|X\boldsymbol{\beta}^{\psi} - \mathbf{y}\|_2^2] &\leq (\tau')^S\left(\|X\Tilde{\boldsymbol{\beta}}^{0} - \mathbf{y}\|_2^2 -\|X\boldsymbol{\beta}^{\psi} - \mathbf{y}\|_2^2\right).
\end{align*}

\end{proof}

\section{Proof of Corollary \ref{CH3:COR:01}}\label{ch3:app:cor1}
\begin{proof}
For the specific choice of $\psi(\cdot) = \frac{1}{2}\|\cdot\|_2^2$, we have
\begin{equation*}
    \nabla^2 \psi(\cdot) = I.
\end{equation*}
That is, $\psi(\cdot)$ is $1-$smooth and $1-$strongly convex, for 
\begin{equation*}
\tau'' = \frac{12L\eta^2 + \frac{n}{m s_m^2}}{\eta - 12L\eta^2}<1,    
\end{equation*}
we can apply part (b) of Theorem 2 to get
\begin{align*}
    \mathbbm{E}[\|X\boldsymbol{\beta}^{a} - X\boldsymbol{\beta}^{\psi}\|_2^2 &\leq (\tau'')^S\left(\|X\Tilde{\boldsymbol{\beta}}^{0} - \mathbf{y}\|_2^2 -\|X\boldsymbol{\beta}^{\psi} - \mathbf{y}\|_2^2\right)\\
    &= (\tau'')^S\left(\|\mathbf{y}\|_2^2 -\|P_{\mathcal{N}(X^T)}\mathbf{y}\|_2^2\right) = (\tau'')^S\|P_{\mathrm{col}(X)}\mathbf{y}\|_2^2.
\end{align*}
Then since $\nabla\psi(\boldsymbol{\beta}^a) = \boldsymbol{\beta}^a \in \mathrm{col}(X^T)$ and $\boldsymbol{\beta}^{\psi} = (X^T X)^{+} X^T \mathbf{y} = X^T (X X^T)^+\mathbf{y} \in  \mathrm{col}(X^T)$, we have
\begin{equation*}
\mathbbm{E}\|\boldsymbol{\beta}_a -\boldsymbol{\beta}^{\psi}\|_2^2\stackrel{\eqref{ch3:eq:lem4}}{\leq} \frac{1}{s_m^2}\mathbbm{E}\|X\boldsymbol{\beta}^{a} - X\boldsymbol{\beta}^{\psi}\|_2^2\leq  \frac{\tau^S}{s_m^2}\|P_{\mathrm{col}(X)}\mathbf{y}\|_2^2.
\end{equation*}
\end{proof}

\section{Proof of Theorem \ref{CH3:COR:02}}\label{ch3:app:F}

To prove Theorem \ref{CH3:COR:02}, we will need a few preliminaries.

\begin{lemma}\label{ch3:lem:08}
Denote $\boldsymbol{\beta}^{(0)} = \argmin_{\boldsymbol{\beta}}\{ \|\boldsymbol{\beta}\|_{1}:  X\boldsymbol{\beta} = P_{\mathrm{col}(X)} \mathbf{y}\}$ and $\boldsymbol{\beta}^{(\delta)} = \argmin_{\boldsymbol{\beta}}\{ \|\boldsymbol{\beta}\|_{1+\delta}^{1+\delta}:  X\boldsymbol{\beta} = P_{\mathrm{col}(X)} \mathbf{y}\}$, we have
\begin{equation}
    \|\boldsymbol{\beta}^{(\delta)}\|_1\leq p^{\frac{\delta}{1+\delta}}\|\boldsymbol{\beta}^{(0)}\|_1
\end{equation}
\end{lemma}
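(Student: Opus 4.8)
The plan is to obtain the bound as a short chain of three inequalities: two are elementary relations between $\ell_1$ and $\ell_{1+\delta}$ norms, and the middle one is the defining optimality of $\boldsymbol{\beta}^{(\delta)}$. The crucial observation is that $\boldsymbol{\beta}^{(0)}$ is itself feasible for the program defining $\boldsymbol{\beta}^{(\delta)}$ (both satisfy the same affine constraint $X\boldsymbol{\beta} = P_{\mathrm{col}(X)}\mathbf{y}$), so I may freely compare their $\ell_{1+\delta}^{1+\delta}$ objective values.

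First I would record the two norm comparisons, valid for any vector $\mathbf{v}\in\mathbbm{R}^p$ and any $\delta>0$. On one side, monotonicity of $\ell_q$ norms in $q$ (for $q\ge 1$) gives $\|\mathbf{v}\|_{1+\delta}\le \|\mathbf{v}\|_1$. On the other side, applying H\"older's inequality with the conjugate exponents $1+\delta$ and $(1+\delta)/\delta$ to the pairing $\sum_i |v_i|\cdot 1$ yields
\begin{equation*}
\|\mathbf{v}\|_1 = \sum_{i=1}^p |v_i|\le \Big(\sum_{i=1}^p |v_i|^{1+\delta}\Big)^{\frac{1}{1+\delta}}\Big(\sum_{i=1}^p 1\Big)^{\frac{\delta}{1+\delta}} = p^{\frac{\delta}{1+\delta}}\,\|\mathbf{v}\|_{1+\delta}.
\end{equation*}
Getting the exponent $\delta/(1+\delta)$ and the direction of each inequality correct is the only place where care is needed.

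Next I would invoke optimality. Since $\boldsymbol{\beta}^{(\delta)}$ minimizes $\|\cdot\|_{1+\delta}^{1+\delta}$ over the feasible set and $\boldsymbol{\beta}^{(0)}$ lies in that set, we have $\|\boldsymbol{\beta}^{(\delta)}\|_{1+\delta}^{1+\delta}\le \|\boldsymbol{\beta}^{(0)}\|_{1+\delta}^{1+\delta}$, hence $\|\boldsymbol{\beta}^{(\delta)}\|_{1+\delta}\le \|\boldsymbol{\beta}^{(0)}\|_{1+\delta}$. Chaining the H\"older bound applied to $\boldsymbol{\beta}^{(\delta)}$, then this optimality step, then the monotonicity bound applied to $\boldsymbol{\beta}^{(0)}$ gives
\begin{equation*}
\|\boldsymbol{\beta}^{(\delta)}\|_1 \le p^{\frac{\delta}{1+\delta}}\|\boldsymbol{\beta}^{(\delta)}\|_{1+\delta}\le p^{\frac{\delta}{1+\delta}}\|\boldsymbol{\beta}^{(0)}\|_{1+\delta}\le p^{\frac{\delta}{1+\delta}}\|\boldsymbol{\beta}^{(0)}\|_1,
\end{equation*}
which is exactly the claimed estimate.

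There is no real obstacle here: the statement is a clean norm-interpolation argument, and the only risk is bookkeeping with the conjugate exponents and the direction of the two $\ell_1$--$\ell_{1+\delta}$ comparisons. I would simply make sure to state explicitly that both minimizers share the feasible set so that the optimality comparison is legitimate, and that $1+\delta\ge 1$ so that $\ell_q$-monotonicity applies.
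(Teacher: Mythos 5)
Your proposal is correct and follows essentially the same route as the paper: H\"older's inequality with exponents $1+\delta$ and $(1+\delta)/\delta$ to get $\|\boldsymbol{\beta}^{(\delta)}\|_1\le p^{\delta/(1+\delta)}\|\boldsymbol{\beta}^{(\delta)}\|_{1+\delta}$, then the optimality comparison over the shared feasible set, then $\|\boldsymbol{\beta}^{(0)}\|_{1+\delta}\le\|\boldsymbol{\beta}^{(0)}\|_1$. The only cosmetic difference is that the paper derives this last step by a coordinate-wise triangle inequality whereas you cite $\ell_q$-monotonicity directly; these are the same fact.
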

\begin{proof}
\begin{align*}
    \|\boldsymbol{\beta}^{(\delta)}\|_1 &= \sum_{i=1}^{p} 1*|\beta_i^{(\delta)}|\\
    &\leq \left(\sum_{i=1}^p 1^{\frac{1+\delta}{\delta}}\right)^{\frac{\delta}{1+\delta}}*\left(\sum_{i=1}^p |\beta_i^{(\delta)}|^{1+\delta}\right)^{\frac{1}{1+\delta}}\\
    &=p^{\frac{\delta}{1+\delta}} \|\boldsymbol{\beta}^{(\delta)}\|_{1+\delta}\\
    &\leq p^{\frac{\delta}{1+\delta}} \|\boldsymbol{\beta}^{(0)}\|_{1+\delta}\\
    &\leq p^{\frac{\delta}{1+\delta}} \left(\sum_{i=1}^p\|\boldsymbol{\beta}^{(0)}\circ \mathbf{e}_i\|_{1+\delta}\right)\\
    &=p^{\frac{\delta}{1+\delta}} \left(\sum_{i=1}^p|\beta^{(0)}_i|\right)\\
    &= p^{\frac{\delta}{1+\delta}}\|\boldsymbol{\beta}^{(0)}\|_1
\end{align*}
where the first inequality holds by Hölder's inequality, the second inequality follows from $\boldsymbol{\beta}^{(\delta)}$ minimizes $\|\boldsymbol{\beta}\|_{1+\delta}^{1+\delta}$ thus minimizes $\|\boldsymbol{\beta}\|_{1+\delta}$ among all $\boldsymbol{\beta}\in \mathcal{B}:= \{\boldsymbol{\beta}:X\boldsymbol{\beta} = P_{\mathrm{col(X)}}\mathbf{y}\}$, and the third inequality follows from triangle inequality.
\end{proof}

\begin{lemma}\label{ch3:lem:10}
Assume $\mathbf{y} = X\boldsymbol{\beta}^o$ where $\boldsymbol{\beta}^o\leq s$ and $X$ is $s-$good with parameter $\kappa<1/2$. Use the notation in Lemma \ref{ch3:lem:08} that $\boldsymbol{\beta}^{(0)} = \argmin_{\boldsymbol{\beta}}\{ \|\boldsymbol{\beta}\|_{1}:  X\boldsymbol{\beta} = \mathbf{y}\}$, we have $\boldsymbol{\beta}^{(0)} = \boldsymbol{\beta}^o$.
\end{lemma}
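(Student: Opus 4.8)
The plan is to recognize this as the standard exact-recovery guarantee for basis pursuit under a null-space-type condition, where the $s$-good property of Definition \ref{ch3:def:sgood} plays exactly the role of the null space property. First I would note that since $\mathbf{y} = X\boldsymbol{\beta}^o$, the true parameter $\boldsymbol{\beta}^o$ is feasible for the program defining $\boldsymbol{\beta}^{(0)}$, so by optimality of $\boldsymbol{\beta}^{(0)}$ we immediately get $\|\boldsymbol{\beta}^{(0)}\|_1 \leq \|\boldsymbol{\beta}^o\|_1$. I would then set the error vector $\mathbf{h} := \boldsymbol{\beta}^{(0)} - \boldsymbol{\beta}^o$ and observe $X\mathbf{h} = X\boldsymbol{\beta}^{(0)} - X\boldsymbol{\beta}^o = \mathbf{y} - \mathbf{y} = \mathbf{0}$, so that $\mathbf{h}\in\mathcal{N}(X)$, which is precisely the regime in which the $s$-good bound can be applied. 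The goal is to show $\mathbf{h} = \mathbf{0}$.

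Let $I := \mathrm{supp}(\boldsymbol{\beta}^o)$, so $|I| = \|\boldsymbol{\beta}^o\|_0 \leq s$. The central step is to split the $\ell_1$ norm over $I$ and $I^c$ and exploit that $\boldsymbol{\beta}^o$ vanishes off $I$. Concretely, I would write
\begin{align*}
\|\boldsymbol{\beta}^o\|_1 \;\geq\; \|\boldsymbol{\beta}^{(0)}\|_1 = \|\boldsymbol{\beta}^o + \mathbf{h}\|_1 = \|\boldsymbol{\beta}^o_I + \mathbf{h}_I\|_1 + \|\mathbf{h}_{I^c}\|_1 \;\geq\; \|\boldsymbol{\beta}^o\|_1 - \|\mathbf{h}_I\|_1 + \|\mathbf{h}_{I^c}\|_1,
\end{align*}
where the last inequality is the reverse triangle inequality together with $\|\boldsymbol{\beta}^o_I\|_1 = \|\boldsymbol{\beta}^o\|_1$. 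Cancelling $\|\boldsymbol{\beta}^o\|_1$ yields the cone condition $\|\mathbf{h}_{I^c}\|_1 \leq \|\mathbf{h}_I\|_1$. Now I would invoke $s$-goodness with this particular index set $I$ (legitimate since $\mathbf{h}\in\mathcal{N}(X)$ and $|I|\leq s$) to obtain $\|\mathbf{h}_I\|_1 \leq \kappa\|\mathbf{h}\|_1$.

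Combining the two bounds gives $\|\mathbf{h}\|_1 = \|\mathbf{h}_I\|_1 + \|\mathbf{h}_{I^c}\|_1 \leq 2\|\mathbf{h}_I\|_1 \leq 2\kappa\|\mathbf{h}\|_1$, and since $\kappa < \tfrac12$ forces $2\kappa < 1$, this inequality can only hold if $\|\mathbf{h}\|_1 = 0$, hence $\mathbf{h} = \mathbf{0}$ and $\boldsymbol{\beta}^{(0)} = \boldsymbol{\beta}^o$; as a byproduct the argument shows every minimizer coincides with $\boldsymbol{\beta}^o$, so the basis-pursuit solution is unique. I do not anticipate a genuine obstacle here, as the entire argument is elementary once the two ingredients are lined up; the only point that needs care is applying Definition \ref{ch3:def:sgood} with exactly the right set $I = \mathrm{supp}(\boldsymbol{\beta}^o)$ and keeping the directions of the reverse-triangle and optimality inequalities consistent so that the coefficient $2\kappa$ emerges strictly below one.
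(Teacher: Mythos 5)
Your proof is correct and follows essentially the same route as the paper's: feasibility of $\boldsymbol{\beta}^o$ gives $\|\boldsymbol{\beta}^{(0)}\|_1\leq\|\boldsymbol{\beta}^o\|_1$, the support decomposition yields the cone condition $\|\mathbf{h}_{I^c}\|_1\leq\|\mathbf{h}_I\|_1$, and $s$-goodness applied to the null-space vector $\mathbf{h}$ forces $\mathbf{h}=\mathbf{0}$. The only cosmetic difference is that you conclude via $2\kappa<1$ on $\|\mathbf{h}\|_1$ directly, while the paper derives $\|\hat{\mathbf{u}}_{S^c}\|_1\leq\frac{\kappa}{1-\kappa}\|\hat{\mathbf{u}}_{S^c}\|_1$ and uses $\frac{\kappa}{1-\kappa}<1$; these are equivalent rearrangements of the same two inequalities.
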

\begin{proof}
$ \boldsymbol{\beta}^o$ is a feasible solution to $\min_{\boldsymbol{\beta}}\{ \|\boldsymbol{\beta}\|_{1}:  X\boldsymbol{\beta} = \mathbf{y}\}$. Denote $\hat{\mathbf{u}} = \boldsymbol{\beta}^o - \boldsymbol{\beta}^{(0)}$, we have
\begin{eqnarray}
\label{ch3:eq:lem10eq1}
        & \left(\|\boldsymbol{\beta}^{(0)}_S\|_1 + \|\boldsymbol{\beta}^{(0)}_{S^c}\|_1\right) = \|\boldsymbol{\beta}^{(0)}\|_1 \leq  \|\boldsymbol{\beta}^{o}\|_1 =\|\boldsymbol{\beta}^{o}_S\|_1 \cr
        \Longrightarrow&\|\boldsymbol{\beta}^{(0)}_{S^c}\|_1\leq \|\boldsymbol{\beta}^{o}_S\|_1 - \|\boldsymbol{\beta}^{(0)}_S\|_1 \leq \|\boldsymbol{\beta}^{o}_S - \boldsymbol{\beta}^{(0)}_S\|_1\cr
        \Longleftrightarrow & \|\hat{\mathbf{u}}_{S^c}\|_1\leq \|\hat{\mathbf{u}}_S\|_1.
\end{eqnarray}

On the other hand, 
\begin{align*}
    X \hat{\mathbf{u}} = X\boldsymbol{\beta}^{o} - X\boldsymbol{\beta}^{(0)} = \mathbf{y} - \mathbf{y} = \mathbf{0}.
\end{align*}
That is, $\hat{\mathbf{u}}\in \mathcal{N}(X)$, then by $X$ is $s-$good, we have for $\kappa < \frac{1}{2}$
\begin{align*}\label{ch3:eq:lem10eq3}
        &\|\hat{\mathbf{u}}_S\|_1\leq \kappa\|\hat{\mathbf{u}}\|_1\\
        \Longrightarrow&\|\hat{\mathbf{u}}_S\|_1\leq \frac{\kappa}{1 - \kappa}\|\hat{\mathbf{u}}_{S^c}\|_1.\numberthis
\end{align*}
Combine \eqref{ch3:eq:lem10eq1} and \eqref{ch3:eq:lem10eq3} we get
\begin{eqnarray*}
    &\|\hat{\mathbf{u}}_{S^c}\|_1\leq\frac{\kappa}{1 - \kappa}\|\hat{\mathbf{u}}_{S^c}\|_1 \cr
    \Longrightarrow &\|\hat{\mathbf{u}}_{S^c}\|_1 = 0\cr
    \stackrel{\eqref{ch3:eq:lem10eq3}}{\Longrightarrow}& \|\hat{\mathbf{u}}_{S}\|_1 = 0.
\end{eqnarray*}
Thus $\hat{\mathbf{u}} = \mathbf{0}$, $\boldsymbol{\beta}^{(0)} = \boldsymbol{\beta}^o$. 
\end{proof}

\begin{lemma}\label{ch3:lem:09}
Assume $X\in \mathbbm{R}^{n\times p}$ is $s-good$ with parameter $\kappa<1/2$, then $\forall \boldsymbol{\beta} \in \mathbbm{R}^p$ and $I\subset\{1,\ldots,p\}$ with $|I| \leq s$ we have
\begin{equation*}
    \|\boldsymbol{\beta}_I\|_1\leq \frac{\sqrt{s} +\kappa\sqrt{p}}{s_m}\|X\boldsymbol{\beta}\|_2 + \kappa \|\boldsymbol{\beta}\|_1
\end{equation*}
where $s_m^2$ is the smallest nonzero eigenvalue of $X^T X$.
\end{lemma}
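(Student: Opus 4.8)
The plan is to exploit the orthogonal decomposition $\mathbbm{R}^p = \mathrm{col}(X^T)\oplus \mathcal{N}(X)$, which splits any $\boldsymbol{\beta}$ into a part controlled by Lemma \ref{ch3:lem:4} and a part controlled by the $s$-good property. Writing $\boldsymbol{\beta} = \boldsymbol{\beta}_\parallel + \boldsymbol{\beta}_\perp$ with $\boldsymbol{\beta}_\parallel = P_{\mathrm{col}(X^T)}\boldsymbol{\beta}$ and $\boldsymbol{\beta}_\perp = P_{\mathcal{N}(X)}\boldsymbol{\beta}$, I note that $X\boldsymbol{\beta} = X\boldsymbol{\beta}_\parallel$ since $\boldsymbol{\beta}_\perp$ is annihilated by $X$. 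Applying the triangle inequality on the index set $I$ gives $\|\boldsymbol{\beta}_I\|_1 \leq \|(\boldsymbol{\beta}_\parallel)_I\|_1 + \|(\boldsymbol{\beta}_\perp)_I\|_1$, and I would bound each term separately.

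For the column-space term, I would use that $|I|\leq s$ together with Cauchy--Schwarz to pass from $\ell_1$ to $\ell_2$: $\|(\boldsymbol{\beta}_\parallel)_I\|_1 \leq \sqrt{s}\,\|\boldsymbol{\beta}_\parallel\|_2$. Since $\boldsymbol{\beta}_\parallel \in \mathrm{col}(X^T)$, Lemma \ref{ch3:lem:4} yields $\|\boldsymbol{\beta}_\parallel\|_2 \leq \frac{1}{s_m}\|X\boldsymbol{\beta}_\parallel\|_2 = \frac{1}{s_m}\|X\boldsymbol{\beta}\|_2$, so this term contributes $\frac{\sqrt{s}}{s_m}\|X\boldsymbol{\beta}\|_2$. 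For the null-space term, since $\boldsymbol{\beta}_\perp \in \mathcal{N}(X)$ and $|I|\leq s$, the $s$-good property applies directly to $\boldsymbol{\beta}_\perp$ and gives $\|(\boldsymbol{\beta}_\perp)_I\|_1 \leq \kappa\|\boldsymbol{\beta}_\perp\|_1$.

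The main obstacle is that the target bound is stated in terms of $\|\boldsymbol{\beta}\|_1$, not $\|\boldsymbol{\beta}_\perp\|_1$, so I still need to control $\|\boldsymbol{\beta}_\perp\|_1$. I would write $\boldsymbol{\beta}_\perp = \boldsymbol{\beta} - \boldsymbol{\beta}_\parallel$ and invoke the triangle inequality once more, $\|\boldsymbol{\beta}_\perp\|_1 \leq \|\boldsymbol{\beta}\|_1 + \|\boldsymbol{\beta}_\parallel\|_1$, then bound the leftover $\ell_1$ norm of the column-space part by its $\ell_2$ norm over all $p$ coordinates, $\|\boldsymbol{\beta}_\parallel\|_1 \leq \sqrt{p}\,\|\boldsymbol{\beta}_\parallel\|_2 \leq \frac{\sqrt{p}}{s_m}\|X\boldsymbol{\beta}\|_2$, again using Lemma \ref{ch3:lem:4}. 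Substituting this into the $s$-good bound produces $\kappa\|\boldsymbol{\beta}\|_1 + \frac{\kappa\sqrt{p}}{s_m}\|X\boldsymbol{\beta}\|_2$, and adding the two contributions gives exactly $\frac{\sqrt{s}+\kappa\sqrt{p}}{s_m}\|X\boldsymbol{\beta}\|_2 + \kappa\|\boldsymbol{\beta}\|_1$ as claimed. The one subtlety worth stressing is that the $s$-good inequality is legitimately applied to $\boldsymbol{\beta}_\perp$ rather than to $\boldsymbol{\beta}$ itself, which is exactly what the membership $\boldsymbol{\beta}_\perp\in\mathcal{N}(X)$ guarantees.
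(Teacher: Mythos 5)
Your proposal is correct and follows essentially the same argument as the paper: the orthogonal split $\boldsymbol{\beta}=P_{\mathrm{col}(X^T)}\boldsymbol{\beta}+P_{\mathcal{N}(X)}\boldsymbol{\beta}$, Cauchy--Schwarz plus Lemma \ref{ch3:lem:4} for the column-space piece, the $s$-good property for the null-space piece, and the final triangle-inequality step converting $\|P_{\mathcal{N}(X)}\boldsymbol{\beta}\|_1$ into $\|\boldsymbol{\beta}\|_1+\frac{\sqrt{p}}{s_m}\|X\boldsymbol{\beta}\|_2$. No gaps; this matches the paper's proof step for step.
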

\begin{proof}
\begin{align*}
        \|\boldsymbol{\beta}_I\|_1 &\leq \|(P_{\mathrm{col}(X^T)}\boldsymbol{\beta})_I\|_1 + \|(P_{\mathcal{N}(X)}\boldsymbol{\beta})_I\|_1\\
        &\leq \sqrt{s}\|(P_{\mathrm{col}(X^T)}\boldsymbol{\beta})_I\|_2 + \kappa\|P_{\mathcal{N}(X)}\boldsymbol{\beta}\|_1\\
        &\leq \sqrt{s}\|P_{\mathrm{col}(X^T)}\boldsymbol{\beta}\|_2 + \kappa(\|P_{\mathcal{N}(X)}\boldsymbol{\beta} + P_{\mathrm{col}(X^T)}\boldsymbol{\beta}\|_1 +\|P_{\mathrm{col}(X^T)}\boldsymbol{\beta}\|_1 )\\
        &\stackrel{\eqref{ch3:eq:lem4}}{\leq}\frac{\sqrt{s}}{s_m}\|X P_{\mathrm{col}(X^T)}\boldsymbol{\beta}\|_2 + \kappa\left(\|\boldsymbol{\beta}\|_1 +\frac{\sqrt{p}}{s_m}\|X P_{\mathrm{col}(X^T)}\boldsymbol{\beta}\|_2 \right)\\
        &=\frac{\sqrt{s} +\kappa\sqrt{p}}{s_m}\|X\boldsymbol{\beta}\|_2 + \kappa \|\boldsymbol{\beta}\|_1
\end{align*}
where the first inequality holds by triangle inequality, the second inequality holds by Cauchy-Schwartz inequality and s-goodness of $X$, the third inequality again uses the triangle inequality.
\end{proof}

\begin{theorem}\label{ch3:thm:arkadi}[Theorem 1.3.1 in \cite{ben2011lectures}] Consider the following optimization problem with any norm $\|\cdot\|$
\begin{align}
\begin{split}\label{ch3:eq:opt:arkadi}
    \hat{\boldsymbol{\beta}} = \arg\min_{\boldsymbol{\beta}}&\quad \|\boldsymbol{\beta}\|_1\\
     s.t. &\quad\|X \boldsymbol{\beta} - \mathbf{y}\|\leq \Delta
\end{split}
\end{align}
Assume $\exists \alpha\geq 0$ and $\kappa'<1/2$ that $\forall \boldsymbol{\beta} \in \mathbbm{R}^p$ and $\forall I\subset\{1,\ldots,p\}$ with $|I| \leq s$:
\begin{equation*}
    \|\boldsymbol{\beta}_I\|_1\leq \alpha\|X\boldsymbol{\beta}\| + \kappa' \|\boldsymbol{\beta}\|_1
\end{equation*}
Suppose $\Tilde{\boldsymbol{\beta}}$ is an approximate solution to \eqref{ch3:eq:opt:arkadi}  such that
\begin{align*}
        \|\Tilde{\boldsymbol{\beta}}\|_1 &\leq \|\hat{\boldsymbol{\beta}}\|_1 + \nu\\
        \|X \Tilde{\boldsymbol{\beta}} - \mathbf{y}\|&\leq \Delta + \epsilon.
\end{align*}
Then for any nearly $s-$sparse and feasible $\boldsymbol{\beta}'$ for \eqref{ch3:eq:opt:arkadi}, where near $s-$sparsity means $\exists$ $s-$sparse $\boldsymbol{\beta^s}$ such that $\|\boldsymbol{\beta}' - \boldsymbol{\beta^s}\|_1\leq v$, we have
\begin{equation*}
    \|\Tilde{\boldsymbol{\beta}} - \boldsymbol{\beta}'\|_1\leq \frac{2\alpha(2\Delta + \epsilon) + 2v + \nu}{1 - 2\kappa'}.
\end{equation*}
\end{theorem}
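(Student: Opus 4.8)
The plan is to run the standard $\ell_1$-recovery argument organized around the error vector $\mathbf{w} := \Tilde{\boldsymbol{\beta}} - \boldsymbol{\beta}'$, for which we must produce an upper bound on $\|\mathbf{w}\|_1$. The argument proceeds in three stages: first bound $\|X\mathbf{w}\|$ from the approximate feasibility of both points; then derive a cone-type inequality controlling $\|\mathbf{w}_{S^c}\|_1$ by $\|\mathbf{w}_{S}\|_1$ (up to slack terms) from the near-optimality of $\Tilde{\boldsymbol{\beta}}$ in $\ell_1$; and finally invoke the assumed goodness inequality to close the loop. Throughout, I would set $S := \mathrm{supp}(\boldsymbol{\beta}^s)$, so that $|S|\leq s$ and the hypothesis may be applied with $I = S$.

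For the first stage, since $\|X\boldsymbol{\beta}' - \mathbf{y}\|\leq \Delta$ and $\|X\Tilde{\boldsymbol{\beta}} - \mathbf{y}\|\leq \Delta + \epsilon$, the triangle inequality gives $\|X\mathbf{w}\| = \|(X\Tilde{\boldsymbol{\beta}} - \mathbf{y}) - (X\boldsymbol{\beta}' - \mathbf{y})\|\leq 2\Delta + \epsilon$. For the second stage, feasibility of $\boldsymbol{\beta}'$ forces $\|\hat{\boldsymbol{\beta}}\|_1 \leq \|\boldsymbol{\beta}'\|_1$, so together with $\|\Tilde{\boldsymbol{\beta}}\|_1 \leq \|\hat{\boldsymbol{\beta}}\|_1 + \nu$ we obtain $\|\Tilde{\boldsymbol{\beta}}\|_1 \leq \|\boldsymbol{\beta}'\|_1 + \nu$. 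I would then expand $\|\Tilde{\boldsymbol{\beta}}\|_1 = \|\boldsymbol{\beta}' + \mathbf{w}\|_1$ by splitting over $S$ and $S^c$, apply the reverse triangle inequality on each block, and use that $\boldsymbol{\beta}'$ lies within $\ell_1$-distance $v$ of the $S$-supported vector $\boldsymbol{\beta}^s$, so $\|\boldsymbol{\beta}'_{S^c}\|_1 \leq v$. Rearranging the resulting chain of inequalities yields the cone condition $\|\mathbf{w}_{S^c}\|_1 \leq \|\mathbf{w}_{S}\|_1 + 2v + \nu$, and hence $\|\mathbf{w}\|_1 \leq 2\|\mathbf{w}_{S}\|_1 + 2v + \nu$.

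The third stage applies the hypothesis $\|\mathbf{w}_{S}\|_1 \leq \alpha\|X\mathbf{w}\| + \kappa'\|\mathbf{w}\|_1$ with $I = S$. Substituting into $\|\mathbf{w}\|_1 \leq 2\|\mathbf{w}_{S}\|_1 + 2v + \nu$ gives $\|\mathbf{w}\|_1 \leq 2\alpha\|X\mathbf{w}\| + 2\kappa'\|\mathbf{w}\|_1 + 2v + \nu$; collecting the $\|\mathbf{w}\|_1$ terms and dividing by the positive quantity $1 - 2\kappa'$ (positive since $\kappa' < 1/2$) isolates $\|\mathbf{w}\|_1$, and inserting the first-stage bound $\|X\mathbf{w}\| \leq 2\Delta + \epsilon$ delivers exactly the claimed estimate $\|\Tilde{\boldsymbol{\beta}} - \boldsymbol{\beta}'\|_1 \leq \frac{2\alpha(2\Delta + \epsilon) + 2v + \nu}{1 - 2\kappa'}$.

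The main obstacle is a bookkeeping one, concentrated in the second stage: one must carefully track how the two slack parameters $v$ (near-sparsity) and $\nu$ (sub-optimality of $\Tilde{\boldsymbol{\beta}}$) propagate when $\|\boldsymbol{\beta}' + \mathbf{w}\|_1$ is decomposed over $S$ and $S^c$, as a sign slip or a misassigned block collapses the cone condition into something useless. The only conceptual care needed is to ensure the goodness inequality is applied to $\mathbf{w}$ directly — here it holds for \emph{all} vectors, not merely those in $\mathcal{N}(X)$ — so that the $\|X\mathbf{w}\|$ term is genuinely present and couples to the feasibility bound from the first stage.
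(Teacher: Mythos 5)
Your proof is correct: the three stages — the triangle-inequality bound $\|X\mathbf{w}\|\leq 2\Delta+\epsilon$ for $\mathbf{w}=\Tilde{\boldsymbol{\beta}}-\boldsymbol{\beta}'$, the cone condition $\|\mathbf{w}_{S^c}\|_1\leq\|\mathbf{w}_S\|_1+2v+\nu$ with $S=\mathrm{supp}(\boldsymbol{\beta}^s)$ (using $\|\hat{\boldsymbol{\beta}}\|_1\leq\|\boldsymbol{\beta}'\|_1$ by feasibility and $\|\boldsymbol{\beta}'_{S^c}\|_1\leq v$ by near-sparsity), and the goodness inequality applied to $\mathbf{w}$ at $I=S$ — combine to give exactly $(1-2\kappa')\|\mathbf{w}\|_1\leq 2\alpha(2\Delta+\epsilon)+2v+\nu$. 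The paper itself offers no proof of this statement (it is imported by citation as Theorem 1.3.1 of the referenced lecture notes), and your argument is the standard one underlying that source, so there is nothing to repair.
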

\vspace{10pt}
We are ready to prove Theorem \ref{CH3:COR:02}.

\begin{proof}[Proof for Theorem \ref{CH3:COR:02}]
By Lemma \ref{ch3:lem:08} we have 
\begin{align*}
    \|\boldsymbol{\beta}^{(\delta)}\|_1 \leq \|\boldsymbol{\beta}^{(0)}\|_1 + \left(p^{\frac{\delta}{1+\delta}}-1\right)\|\boldsymbol{\beta}^{(0)}\|_1
\end{align*}
And by Lemma \ref{ch3:lem:09}
\begin{equation*}
    \|\boldsymbol{\beta}_I\|_1\leq \frac{\sqrt{s} +\kappa\sqrt{p}}{s_m}\|X\boldsymbol{\beta}\|_2 + \kappa \|\boldsymbol{\beta}\|_1
\end{equation*}
Thus we can apply Theorem \ref{ch3:thm:arkadi} by setting $\Delta = 0$, and $\|\cdot\| = \|\cdot\|_2$, then $\boldsymbol{\hat{\beta}} = \boldsymbol{\beta}^{(0)}$. Take $\Tilde{\boldsymbol{\beta}} = \boldsymbol{\beta}^{(\delta)}$, $\boldsymbol{\beta}' = \boldsymbol{\beta}^o$, then $\nu = \left(p^{\frac{\delta}{1+\delta}}-1\right)\|\boldsymbol{\beta}^{(0)}\|_1$, $\epsilon = 0$ , $v = 0 $, $\alpha = \frac{\sqrt{s} +\kappa\sqrt{p}}{s_m}$, $\kappa' = \kappa$, then we have
\begin{equation}\label{ch3:eq:cor2eq3}
    \|\boldsymbol{\beta}^{(\delta)} - \boldsymbol{\beta}^o\|_1\leq \frac{ \left(p^{\frac{\delta}{1+\delta}}-1\right)\|\boldsymbol{\beta}^{(0)}\|_1}{1 - 2\kappa} = \frac{ \left(p^{\frac{\delta}{1+\delta}}-1\right)\|\boldsymbol{\beta}^o\|_1}{1 - 2\kappa}
\end{equation}
where the equality is from Lemma \ref{ch3:lem:10}.

We can further bound $\|\boldsymbol{\beta}^o\|_1$. By $(s,\gamma)-RE$ condition we have
\begin{align*}
    \|\boldsymbol{\beta}^o\|^2_2\leq \frac{\|X \boldsymbol{\beta}^o\|^2_2}{n\gamma} = \frac{\|\mathbf{y}\|_2^2}{n\gamma}.
\end{align*}
Thus 
\begin{align}\label{ch3:eq:cor2eq5}
    \|\boldsymbol{\beta}^o\|_1\leq \sqrt{s}\|\boldsymbol{\beta}^o\|_2\leq \sqrt{\frac{s}{n\gamma}}\|\boldsymbol{y}\|_2.
\end{align}
Plug \eqref{ch3:eq:cor2eq5} into \eqref{ch3:eq:cor2eq3}, we have
\begin{align*}
   \|\boldsymbol{\beta}^{(\delta)} - \boldsymbol{\beta}^o\|_1 &\leq \frac{ \left(p^{\frac{\delta}{1+\delta}}-1\right)}{1 - 2\kappa}\sqrt{\frac{s}{n\gamma}}\|\boldsymbol{y}\|_2\\
   &\leq \frac{ \left(p^{\frac{\log \left(1+\frac{(1-2\kappa)\sqrt{n\gamma}}{\sqrt{s}\|\mathbf{y}\|_2}\xi\right)}{\log p}}-1\right)}{1 - 2\kappa} \sqrt{\frac{s}{n\gamma}}\|\boldsymbol{y}\|_2\\
   &= \frac{ \left(1 + \frac{(1-2\kappa)\sqrt{n\gamma}}{\sqrt{s}\|\mathbf{y}\|_2}\xi-1\right)}{1 - 2\kappa}\sqrt{\frac{s}{n\gamma}}\|\boldsymbol{y}\|_2 =  \xi.
\end{align*}

\end{proof}

\end{document}